\documentclass{article}
\usepackage{amsmath,graphicx}

\usepackage{microtype}
\usepackage{subfigure}
\usepackage{booktabs} 
\usepackage[matit]{echodefs}
\usepackage{lipsum}
\usepackage{amsfonts}
\usepackage{epstopdf}
\usepackage{mathtools}
\usepackage[autostyle]{csquotes}
\usepackage{amsmath,bm}
\usepackage{cleveref}
\usepackage{thmtools, thm-restate}
\usepackage{romannum}

\usepackage{algorithm}
\usepackage{algorithmic}

\newcommand{\calL}{\mathcal{L}}
\newcommand{\calX}{\mathcal{X}}
\newcommand{\E}{\mathbb{E}}

\title{Learning Schatten--von Neumann Operators}
%
\author{Puoya Tabaghi $^{\star}$ \qquad  Maarten de Hoop $^{\dagger}$ \qquad Ivan~Dokmani\'c $^{\star}$}
\date{}

\begin{document}
%
\maketitle
\begin{abstract}
We study the learnability of a class of compact operators known as Schatten--von Neumann operators. These operators between infinite-dimensional function spaces play a central role in a variety of applications in learning theory and inverse problems. We address the question of sample complexity of learning Schatten-von Neumann operators and provide an upper bound on the number of measurements required for the empirical risk minimizer to generalize with arbitrary precision and probability, as a function of class parameter $p$. Our results give generalization guarantees for regression of infinite-dimensional signals from infinite-dimensional data. Next, we adapt the representer theorem of Abernethy \emph{et al.} to show that empirical risk minimization over an a priori infinite-dimensional, non-compact set, can be converted to a convex finite dimensional optimization problem over a compact set. In summary, the class of $p$-Schatten--von Neumann operators is probably approximately correct (PAC)-learnable via a practical convex program for any $p < \infty$.
\end{abstract}

{\let\thefootnote\relax\footnote{{$^{\star}$ University of Illinois at Urbana-Champaign, USA $^{\dagger}$, CAAM, Rice University, Houston, USA,
$^{\star}${\small \tt \{tabaghi2, dokmanic\}@illinois.edu}, $^{\dagger}${\small \tt mdehoop@rice.edu}}}}

\section{Introduction} \label{Introduction}
Objects of interest in many problems in machine learning and inverse problems are best modeled as vectors in infinite-dimensional function spaces. This is the case in collaborative filtering in machine learning \cite{abernethy2009new}, non-linear inverse problems for the wave equation, and general regularized solutions to inverse problems \cite{engl1996regularization}. Relationships between these objects are often well-modeled by linear operators.

Among linear operators, compact operators are a natural target for learning because they are stable and appear commonly in applications. In infinite dimension, boundedness alone does not guarantee learnability (cf. Remark \ref{rmk:bounded}). Unbounded operators are poor targets for learning since they are not stable. A classical example from regularization of inverse problems is that if the direct operator is compact (for example, the Radon transform in computed tomography), then the inverse is unbounded, so we replace it by a compact approximation.

When addressing inverse and learning problems numerically, we need to discretize the involved operators. The fundamental properties of operator fitting, however, are governed by the continuous structure, and it is interesting to study properties of these continuous objects. 

Learning continuous operators from samples puts forward two important questions:
\begin{enumerate}
    \item Can a given class of operators be learned from samples? How many samples are needed to guarantee that we learn the ``best'' possible operator? Here we assume that the input samples $x$ and the output samples $y$ are drawn from some joint probability measure $P$ which admits a ``best'' $T$ that maps $x$ to $y$, and we ask whether $T$ can be approximated from samples from $P$.
    \item What practical algorithms exist to learn operators belonging to certain classes of compact operators, given that those classes are infinite-dimensional, and in fact non-compact? 
\end{enumerate}

In this paper we address these questions for compact operators known as $p$-Schatten--von Neumann operators, whose singular value sequences have finite $\ell^p$ norms. These operators find applications in a number of inverse problems, in particular those related to scattering. For the first question, we show that the class is learnable in the probably-approximately-correct sense for all $p < \infty$, and we prove the dependence of sample complexity on $p$. We work within the Vapnik-Chervonenkis framework of statistical learning theory, and bound the sample complexity via computing the Rademacher complexity of the class as a function of $p$.

For the second question, we adapt the results of by \cite{abernethy2009new} who showed that infinite-dimensional learning problems similar to ours can be transformed into finite-dimensional optimization problems. In our proofs, we make explicit the fact about the non-compactness of the involved hypothesis classes.

\subsection{Related work} 
\label{sub:related_work}

The closest work to ours is that of Maurer on sample complexity for multitask learning, \cite{maurer2006rademacher, maurer2006bounds}. He computes sample complexity of finite-rank operators $T : \mathcal{H} \to \R^m$, where $\mathcal{H}$ is a Hilbert space. In general, there is quite a bit of work on complexity of finite-dimensional classes. Kakade \emph{et al.} \cite{kakade2009complexity} study the generalization properties of scalar-valued linear regression on Hilbert spaces. A survey of statistical bounds for estimation and classification in terms of Rademacher complexity is available in \cite{mendelson2003few, boucheron2005theory}. To the best of our knowledge, this is the first paper to look at the sample complexity of learning infinite-dimensional operators, where the hypothesis class is non-compact.

On the algorithmic side Abernethy \emph{et al.} \cite{abernethy2009new} propose learning algorithms for a problem related to ours. They show how in the context of collaborative filtering, a number of existing algorithms can be abstractly modeled as learning compact operators, and derive a representer theorem which casts the problem as optimization over matrices for general losses and regularizers.

Our work falls under the purview of ``machine learning for inverse problems'', an expanding field of learning inverse operators and regularizers from data, especially with the empirical successes of deep neural networks. Machine learning has been successfully applied to problems in computed tomography \cite{jin2017deep, kothari2018random}, inverse scattering \cite{li2016underwater}, and compressive sensing \cite{bora:2017compressed}, to name a few. A number of parallels between statistical learning and inverse problems are pointed out in \cite{vito2005learning}.

\section{Motivation}

\subsection{Regularized Inverses in Imaging}

Consider a linear operator equation
\[
    y = Ax,
\]
where $A : \calX \to \calY$ is a Schatten-von Neumann operator. This is the case in a variety of ``mildly'' ill-posed inverse problems such as computed tomography, where $A$ is the Radon transform. It is well known that even when $A$ is injective, the compactness of $A$ makes the problem ill-posed in the sense that $A^{-1}$ is not a continuous linear operator from $\calY \to \calX$. This becomes problematic whenever instead of $y$ we get to measure some perturbed data $y^\delta$, as we always do. It also makes $A^{-1}$ a poor target for learning since it does not make much sense to learn unstable operators.

A classical regularization technique is then to solve
\[
    \wh{x} = \argmin_x \norm{y^\delta - Ax}^2 + \lambda \norm{x}^2,
\]
which can be interpreted as a maximum a posteriori solution under Gaussian noise and signal priors. The resulting solution operator $y^\delta \mapsto \wh{x}$ can formally be written as
\[
    R_\lambda = (A^* A + \lambda \cdot \text{Id})^{-1} A^*,
\]
which can be shown to be a Schatten-von Neumann operator. ($A^*$ denotes the adjoint of $A$.) If the forward operator $A$ is completely or partially unknown, or the noise and prior distributions are different from isotropic Gaussian, it is of interest to learn the best regularized linear operator from samples. Thus one of the central questions becomes that of sample complexity and generalization error.
 
\subsection{Collaborative Filtering}

In collaborative filtering the goal is to predict rankings of objects belonging to some class $\calY$ by users belonging to $\calX$. \cite{abernethy2009new} show that both the users and the objects are conveniently modeled as belonging to infinite-dimensional reproducing kernel Hilbert spaces $\calX$ and $\calY$. The rankings can then be modeled by the following functional on $\mathcal{X} \times \mathcal{Y}$,
\[
    [\text{ranking of}~y~\text{by}~x] = \inprod{x, Fy}.
\]
Given a training sample consisting of users and rankings embedded in their respective spaces, the ``best'' $F$ is estimated by minimizing regularized empirical risk. The regularizers used by \cite{abernethy2009new} are of the form $\Omega(F) = \sum_{k \geq 1} \eta_k(s_k(F))$, where $\set{s_k(F)}_{k \geq 1}$ are the singular values of $F$, and $\eta_k$ are non-decreasing penalty functions.

\subsection{Schatten--von Neumann Operators in Wave Problems}

Schatten--von Neumann operators play an important role in non-linear inverse problems associated with the wave equation. In particular, in inverse scattering approaches via boundary control \cite{bingham2008iterative} and scattering control \cite{caday2018reconstruction, caday2019scattering}, the reconstruction algorithms are given as recursive procedures with data operators that belong to the Schatten--von Neumann class. Under conditions that the inverse problem yields a unique solution, the question whether the inverse map is learnable may be analyzed by studying whether a Schatten-Von Neumann operator is learnable. While the overall inverse maps in these cases are nonlinear, we see the results presented here as a gateway to studying the learning-theoretic aspects of these general nonlinear inverse problems.

\section{Problem Statement and Main Result}\label{sec:main_result}

The main goal of this paper is to study the learnability of Schatten--Von Neumann class of compact operators. We use a model-free or agnostic approach \cite{HAUSSLER199278}, in the sense that we do not require our training samples to satisfy $y = Tx$ for any putative $T$, and the optimal risk can be nonzero. Instead, we are looking for an operator that provides the best fit to a given training set generated from an arbitrary distribution.

As usual, the training set consists of $N$ i.i.d. samples $z_1, \cdots, z_N \in \mathcal{Z}$, where $z_n = (x_n, y_n) \sim P$. The samples are generated from an unknown probability measure $P \in \mathcal{P}(\mathcal{Z})$, where $\calP(\mathcal{Z})$ is the set of all probability measures defined on $\mathcal{Z}$. We take $\mathcal{Z}$ to be the Cartesian product of input and output Hilbert spaces, namely $\mathcal{Z} = \mathcal{H}_x \times \mathcal{H}_y$. The hypothesis space $\mathcal{T}_p$ is the set of all $p$-Schatten--von Neumann operators:

\begin{definition}[Schatten--von Neumann] \label{def:schatten_operators}
If the sequence $(s_n)_{n \geq 1}$ of singular values of a compact linear operator $T: \mathcal{H}_x \rightarrow \mathcal{H}_y$ is $\ell^p$-summable, for $0 < p < \infty$,  then $T$ is said to belong to the $p$-Schatten-von Neumann class, denoted $S_p(\mathcal{H}_x, \mathcal{H}_y)$.
\end{definition}	

For a given sample $(x, y) \in \mathcal{H}_x \times \mathcal{H}_y$ and hypothesis $T \in \mathcal{T}_p$, we measure the goodness of fit, or the loss, as $\mathcal{L}(y, Tx) = \norm{y - Tx}^2$. The risk of a hypothesis $T$ is defined as its average loss with respect to the data-generating measure, $\mathbb{E}_P \mathcal{L}(y, Tx)$.

Ideally, we would then like to find an operator $T \in \mathcal{T}_p$ that has the minimum risk among the hypothesis class. While we do not have access to the true data-generating distribution, we get to observe it through a finite number of random samples, $(x_n, y_n) \sim P$, $n \in \set{1, \ldots, N}$.  Given a training set $\set{(x_n, y_n)}_{n=1}^{N}$, a central question is whether we can guarantee that the hypothesis learned from training samples will do well on other samples as well. Put differently, we want to show that the risk of the hypothesis learned from a finite  number of training samples is close to that of the hypothesis estimated with the knowledge of the true distribution, with high probability. 

We show that the class $\mathcal{T}_p$ is indeed learnable in the probably-approximately-correct (PAC) sense by showing that the risk achieved by the empirical risk minimization (ERM) converges to the true risk as the sample size grows. Concretely, assuming that $\norm{x_n}$ and $\norm{y_n}$ are bounded almost surely, we show that 

\begin{equation} \label{eq:PAC_learnability}
    \E_{P} \mathcal{L}(y,   \widehat{T}_N x) \leq  \inf_{T \in \mathcal{T}_p }\E_{P} \mathcal{L}(y, Tx) + O(N^{-\min \{ \frac{1}{p}, \frac{1}{2} \}})
\end{equation}
with high probability over training samples, where
\begin{equation*}
    \widehat{T}_N = \argmin_{T \in \mathcal{T}_p} \frac{1}{N}\sum_{n=1}^{N}{\mathcal{L}(y_n,T x_n)},
\end{equation*}
is a random minimizer of the empirical risk. (Existence of the empirical risk minimizer follows from \Cref{sec:learning_algorithm}.) We thus show that the class of $p$-Schatten--von Neumann operators is PAC-learnable, since ERM produces a hypothesis $\widehat{T}_N$ whose risk converges to the best possible. The rate of convergence determines the number of samples required to guarantee a given target risk with high probability.

The result makes intuitive sense. To see this, note that the inclusion
\[
    \mathcal{T}_{p_1} \subset \mathcal{T}_{p_2},
\]
for $p_1 < p_2$, implies that the smaller the $p$, the easier it is to learn $T \in \mathcal{T}_p$, as predicted by the theorem.

The minimization for $\wh{\mathcal{T}}_N$ involves optimization over an infinite-dimensional, non-compact set. In \Cref{sec:learning_algorithm} we show that in fact, the ERM can be carried out by a well-defined finite dimensional optimization.

\section{Learnability Theorem} \label{sec:learnability_theorem}

In order to state and prove the learnability thorem, we first recall some facts about linear operators in Hilbert spaces \cite{reed1980functional}. Then, we connect PAC-learnability of $p$-Schatten class of operators with its Rademacher complexity in \Cref{thm:learnability_theorem}. Finally, we establish the learnability claim by showing the said Rademacher complexity vanishes with the number of measured samples, \Cref{thm:vanishing_rademacher_complexity}.

\subsection{Preliminaries} \label{sec:schatten-von_neumann_operators}

Let $\mathcal{H}_x, \mathcal{H}_y$ be complex Hilbert spaces, $T : \mathcal{H}_x \rightarrow \mathcal{H}_y$ a compact operator, and let $T^*$ denote its adjoint operator. Let $\set{\psi_k}_k$ be the eigenvectors of a compact, self-adjoint, and non-negative operator $T^*T: \mathcal{H}_x \rightarrow \mathcal{H}_x$, and $\set{ \lambda_k(T^*T)}_{k}$ be the corresponding eigenvalues. We define the absolute value of $T$ as $|T| = (T^*T)^{1/2}$. The eigenvalues of $|T|$, $(\sqrt{\lambda_k(T^*T)})_{k\geq 1}$, are called the singular values of $T$ and denoted by $(s_k(T))_{k \geq 1}$. We always assume, without loss of generality, that the sequence of singular values is non-increasing.

Recall the definition of Schatten--von Neumann operators, \Cref{def:schatten_operators}. For $1 \leq p < \infty$, the class $S_p(\mathcal{H}_x, \mathcal{H}_y)$ becomes a Banach space equipped with the norm $\|T \|_{S_p}$,
\begin{equation} \label{eq:Sp_norm}
\|	T \|_{S	_p} = \bigg( \sum_{k=1}^{\infty}{s_k(T)^p}\bigg)^{1/p}.
\end{equation}
Specifically, $S_2(\mathcal{H}_x,\mathcal{H}_y)$ is the set of Hilbert-Schmidt operators while $S_1(\mathcal{H},\mathcal{H})$ is the algebra of trace class operators.  Let $T \in S_1(\mathcal{H}, \mathcal{H})$, or simply $T \in S_1(\mathcal{H})$, and $(\psi_k)_{k \geq 1}$ be any orthonormal basis for $\mathcal{H}$ (which exists because $\mathcal{H}$ is a Hilbert space). The series
\begin{equation} \label{eq:trace_norm}
    \mathrm{Tr}(T)  = \sum_{k\geq 1} { \langle T\psi_k,\psi_k \rangle}
\end{equation}
is well-defined and called the trace of $T \in  S_1(\mathcal{H})$. Finally, $S_{\infty}(\mathcal{H}_x, \mathcal{H}_y)$ is the class of bounded operators endowed with the operator norm $\|	T \|_{S_\infty} := \|T \|_{\text{op}}$.

An important fact about the set of bounded Schatten operators $\{T \in S_p(\mathcal{H}_x, \mathcal{H}_y) \ : \ \norm{T}_{S_p} \leq B\}$ is that it is not compact, which requires a bit of care when talking about the various minimizers. For $0 < p < 1$, $S_p(\mathcal{H}_x, \mathcal{H}_y)$ is a complete space with quasi-norm of $\| T \|_{S_p}$. 

\subsection{Rademacher Complexity and Learnability of the Hypothesis Class} \label{sec:radmad}

In this section, we show that ERM algorithm PAC-learns the class of $p$-Schatten--von Neumann operators. The proofs of all formal results are given in Section \ref{sec:proofs}. 

Informally, ERM works if we have enough measurements to dismiss all bad near-minimizers of the empirical risk. In other words, generalization ability of the ERM minimizer is negatively affected by the complexity of the hypothesis class. A common complexity measure for classes of real-valued functions is the Rademacher complexity  \cite{shalev2014understanding, bartlett2002rademacher}.

\begin{definition}[Rademacher Complexity]\label{def:rademacher_complexity}
Let $P$ be a probability distribution on a set $\mathcal{Z}$ and suppose that $Z_1, \cdots, Z_N \in \mathcal{Z}$ are independent samples selected according to $P$. Let $\mathcal{F}$ be a class of functions mapping from $\mathcal{Z}$ to $\R$. Given samples $Z^N := Z_1,\cdots, Z_N$, the conditional Rademacher complexity of $\mathcal{F}$ is defined as
\begin{equation*}
R_N(\mathcal{F}|Z^N) = \E\big[ \sup_{f \in \mathcal{F}} | \frac{1}{N} \sum_{n=1}^{N}{\sigma_i f(Z_n)}|  \big| Z^N \big] 
\end{equation*}
where $\sigma_1, \cdots, \sigma_N$ are i.i.d Rademacher random variables. The Rademacher complexity of $\mathcal{F}$ is $R_N(\mathcal{F}) = \E R_N(\mathcal{F}|Z^N)$.
\end{definition}

The following result connects PAC-learnability of our operators with the Rademacher complexity the hypothesis class. It shows that a class is PAC-learnable when Rademacher complexity converges to zero as the size of the training dataset grows.

\begin{restatable}[Learnability]{theorem}{main}
\label{thm:learnability_theorem}
Let $(x_n,y_n)_{n=1}^{N}$ be independently selected according to the probability measure $P \in \mathcal{P}(\mathcal{Z})$. Moreover, assume $\| x_n \|$ and $\|y_n \|$ are bounded random variables, $\| x_n \| \leq C_x$ and $\| y_n \| \leq C_y$ almost surely. Then, for any $0 < \delta < 1$, with probability at least $1-\delta$ over samples of length $N$, we have
\begin{align*}
\mathbb{E} \mathcal{L}(y, &\widehat{T}_N x) \leq  \inf_{T \in \mathcal{T}_p }\mathbb{E} \mathcal{L}(y, Tx) + 4 R_N( \mathcal{L} \circ \mathcal{T}_p ) + 2(C_y + BC_x)^2 \sqrt{\frac{1}{N}  \log \frac{1}{\delta} } 
\end{align*}
where $\mathcal{T}_p = \{ T \in S_p(H_x,H_y): \| T \|_{S_p} \leq B \}$, $\widehat{T}_N = \argmin_{T \in \mathcal{T}_p} \frac{1}{N}\sum_{n=1}^{N}{\mathcal{L}(y_n,T x_n)}$, and $\mathcal{L} \circ \mathcal{T}_p = \{f:(x,y) \rightarrow \mathcal{L}(y,Tx): T \in \mathcal{T}_p \} $, and $1 \leq p < \infty$. 
\end{restatable}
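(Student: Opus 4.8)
The plan is to follow the classical uniform-convergence route for agnostic PAC learning, treating $\mathcal{L} \circ \mathcal{T}_p$ as a class of real-valued functions on $\mathcal{Z}$ and reducing the excess risk of the ERM hypothesis to a symmetrized uniform deviation that is controlled by the Rademacher complexity. Write $L(T) = \E\,\mathcal{L}(y, Tx)$ for the true risk and $\widehat{L}_N(T) = \frac{1}{N}\sum_{n=1}^{N}\mathcal{L}(y_n, T x_n)$ for the empirical risk. The first step is to record that the loss is \emph{uniformly bounded} on the class: because the operator norm is dominated by every Schatten norm, $\norm{T}_{\mathrm{op}} = s_1(T) \leq \norm{T}_{S_p} \leq B$ for $T \in \mathcal{T}_p$, so $\norm{Tx} \leq B\norm{x} \leq B C_x$ almost surely, and by the triangle inequality $0 \leq \mathcal{L}(y, Tx) = \norm{y - Tx}^2 \leq (C_y + B C_x)^2 =: M$ for every $T \in \mathcal{T}_p$ and every in-support sample. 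This uniform bound $M$ is exactly the constant appearing in the theorem and is what makes the concentration arguments below applicable.

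Next I would reduce the excess risk to a uniform deviation. Fix $\epsilon > 0$ and pick a near-minimizer $T_\epsilon \in \mathcal{T}_p$ with $L(T_\epsilon) \leq \inf_{T} L(T) + \epsilon$. Using the ERM optimality $\widehat{L}_N(\widehat{T}_N) \leq \widehat{L}_N(T_\epsilon)$, the standard add-and-subtract decomposition gives
\begin{align*}
L(\widehat{T}_N) - \inf_{T \in \mathcal{T}_p} L(T) &\leq \big(L(\widehat{T}_N) - \widehat{L}_N(\widehat{T}_N)\big) + \big(\widehat{L}_N(T_\epsilon) - L(T_\epsilon)\big) + \epsilon\\
&\leq 2\sup_{T \in \mathcal{T}_p}\big|L(T) - \widehat{L}_N(T)\big| + \epsilon,
\end{align*}
and since $\epsilon$ is arbitrary the $+\epsilon$ can be dropped. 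It therefore suffices to control the single random quantity $\Phi(Z^N) := \sup_{T \in \mathcal{T}_p}|L(T) - \widehat{L}_N(T)|$ in high probability.

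For that I would invoke the two classical ingredients. First, \emph{concentration}: replacing one sample $(x_n, y_n)$ by another changes a single summand of $\widehat{L}_N$ by at most $M$, hence changes $\Phi$ by at most $M/N$; the bounded-differences (McDiarmid) inequality then yields $\Phi \leq \E\Phi + M\sqrt{\frac{1}{N}\log\frac{1}{\delta}}$ with probability at least $1-\delta$, producing the last term of the bound after the factor of $2$ from the previous step. Second, \emph{symmetrization}: a standard ghost-sample argument bounds the expected uniform deviation by twice the Rademacher complexity of the loss class, $\E\Phi \leq 2 R_N(\mathcal{L} \circ \mathcal{T}_p)$. Combining the two and multiplying by the factor $2$ from the ERM reduction produces the claimed $4 R_N(\mathcal{L} \circ \mathcal{T}_p)$ term and the stated $\sqrt{\frac{1}{N}\log\frac{1}{\delta}}$ term.

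The subtle points, as opposed to the routine ones, are those forced by working in an infinite-dimensional, non-compact hypothesis class. One must verify that the supremum defining $\Phi$ and the minimizer $\widehat{T}_N$ are well behaved: measurability of $\Phi$ (so that McDiarmid and $\E\Phi$ make sense) and existence of the minimizer. The boundedness step is doing real work here—without the inequality $\norm{T}_{\mathrm{op}} \leq \norm{T}_{S_p}$ the loss class would be unbounded and neither McDiarmid nor symmetrization would apply, which is precisely why the Schatten structure is needed and why mere boundedness of $T$ does not suffice. Existence of $\widehat{T}_N$ is handled separately via the representer theorem of \Cref{sec:learning_algorithm}. I expect the main obstacle to be only this bookkeeping, since the genuinely quantitative, $p$-dependent work—showing $R_N(\mathcal{L} \circ \mathcal{T}_p) \to 0$ at the rate $N^{-\min\{1/p,\,1/2\}}$—is deferred to \Cref{thm:vanishing_rademacher_complexity}; the present theorem is the reduction that isolates that quantity.
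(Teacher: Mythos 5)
Your proposal is correct and follows essentially the same route as the paper's proof: the same $\epsilon$-near-minimizer decomposition to handle the non-compactness of $\mathcal{T}_p$ (yielding the factor $2$ on the uniform deviation), McDiarmid's inequality via the bounded-differences property with constant $(C_y + BC_x)^2/N$, and the standard symmetrization bound $\E \Delta_N \leq 2 R_N(\mathcal{L} \circ \mathcal{T}_p)$. The only cosmetic difference is in justifying the uniform loss bound $(C_y + BC_x)^2$: you use $\norm{T}_{\mathrm{op}} \leq \norm{T}_{S_p} \leq B$ with the triangle inequality, whereas the paper exhibits the explicit rank-one maximizer $\widehat{T} = -\frac{B}{\norm{y}\norm{x}} y x^*$; both give the same constant, and your single-sample perturbation argument for the bounded-differences step is in fact slightly cleaner than the paper's two-supremum triangle-inequality estimate.
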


The proof is standard; we adapt it for the case of Schatten--von Neumann operators.

Our main result is the following bound on the Rademacher complexity of the loss class induced by $\mathcal{T}_p$.

\begin{restatable}[Vanishing Rademacher Complexity]{theorem}{RC}
\label{thm:vanishing_rademacher_complexity}
Let $(x_n, y_n)_{n=1}^{N}$ be independently selected according to the probability measure $P \in \mathcal{P}(\mathcal{Z})$ and $\mathcal{T}_p = \{ T \in S_p(H_x,H_y): \| T \|_{S_p} \leq B \}$.  Moreover, assume $\| x_n \| \leq C_x$ and $\|y_n \| \leq C_y$ almost surely. Then, we have
\begin{equation*}
R_N(\mathcal{L} \circ  \mathcal{T}_p) \leq \frac{C_y}{\sqrt{N}}+ N^{-\min\{ \frac{1}{2}, \frac{1}{p}\}}(B^2 C_x  + 2BC_x C_y)
\end{equation*}
where $\mathcal{L} \circ \mathcal{T}_p = \{f:(x,y) \rightarrow \mathcal{L}(y,Tx): T \in \mathcal{T}_p \} $, $\mathcal{L}(y,T x) = \|y -Tx\|^{2}$, and $1 \leq p < \infty$.
\end{restatable}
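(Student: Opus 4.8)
The plan is to expand the squared loss, $\mathcal{L}(y_n,Tx_n)=\norm{y_n}^2-2\inprod{y_n,Tx_n}+\norm{Tx_n}^2$, and bound the three resulting contributions to $R_N(\mathcal{L}\circ\mathcal{T}_p)$ separately. Since $\norm{y_n}^2$ is independent of $T$, pulling it out and applying the triangle inequality inside the supremum gives
\begin{equation*}
R_N(\mathcal{L}\circ\mathcal{T}_p)\le \E\Big|\tfrac{1}{N}\sum_{n}\sigma_n\norm{y_n}^2\Big| + 2\,\E\sup_{T\in\mathcal{T}_p}\Big|\tfrac{1}{N}\sum_n\sigma_n\inprod{y_n,Tx_n}\Big| + \E\sup_{T\in\mathcal{T}_p}\Big|\tfrac{1}{N}\sum_n\sigma_n\norm{Tx_n}^2\Big|.
\end{equation*}
The first term carries no supremum; by Jensen's inequality and $\E\sigma_n\sigma_m=\delta_{nm}$ it is at most $\tfrac{1}{N}(\sum_n\norm{y_n}^4)^{1/2}\le C_y^2 N^{-1/2}$, which supplies the $N^{-1/2}$ piece. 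The two $T$-dependent suprema, which carry all the dependence on $p$, are the substance of the argument.

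Next I would recast both $T$-dependent terms as Hilbert--Schmidt inner products so that Schatten--H\"older duality can be applied. Writing $y_n\otimes x_n$ for the rank-one operator $u\mapsto\inprod{x_n,u}y_n$, we have $\inprod{y_n,Tx_n}=\mathrm{Tr}\bigl(T^*(y_n\otimes x_n)\bigr)$, so the linear term equals $\inprod{S,T}_{\mathrm{HS}}$ with $S=\tfrac{1}{N}\sum_n\sigma_n\,y_n\otimes x_n$, and duality of the Schatten norms yields $\sup_{\norm{T}_{S_p}\le B}|\inprod{S,T}_{\mathrm{HS}}|=B\norm{S}_{S_q}$ with $q=p/(p-1)$. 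Likewise $\norm{Tx_n}^2=\mathrm{Tr}\bigl(T^*T\,(x_n\otimes x_n)\bigr)$, so the quadratic term equals $\inprod{T^*T,M}_{\mathrm{HS}}$ with $M=\tfrac{1}{N}\sum_n\sigma_n\,x_n\otimes x_n$; using $\norm{T^*T}_{S_{p/2}}=\norm{T}_{S_p}^2\le B^2$ together with H\"older at the conjugate exponent $q_2=p/(p-2)$ bounds it by $B^2\norm{M}_{S_{q_2}}$.

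The crucial observation is that $S$ and $M$ are each sums of $N$ rank-one operators and hence have rank at most $N$; this is precisely where the non-compactness of $\mathcal{T}_p$ is rendered harmless. For any operator of rank at most $N$ and exponent $r\le 2$ one has $\norm{\cdot}_{S_r}\le N^{1/r-1/2}\norm{\cdot}_{S_2}$, while for $r\ge 2$ the norm only decreases, $\norm{\cdot}_{S_r}\le\norm{\cdot}_{S_2}$. The Hilbert--Schmidt norms are controlled in expectation via independence of the signs: $\E\norm{S}_{S_2}\le\tfrac1N(\sum_n\norm{y_n\otimes x_n}_{\mathrm{HS}}^2)^{1/2}\le C_xC_y N^{-1/2}$, and analogously $\E\norm{M}_{S_2}\le C_x^2 N^{-1/2}$. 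For $p\ge 2$ we have $q\le 2$, so $2B\,\E\norm{S}_{S_q}\le 2B\,N^{1/q-1/2}C_xC_y N^{-1/2}=2BC_xC_y\,N^{-1/p}$, and the quadratic term is at most $B^2C_x^2 N^{-1/p}$ after a short case split on whether $q_2$ exceeds $2$. For $1\le p\le 2$ I would instead use $\norm{T}_{S_2}\le\norm{T}_{S_p}\le B$ (and $\norm{T^*T}_{S_2}=\norm{T}_{S_4}^2\le\norm{T}_{S_2}^2\le B^2$) to reduce both terms to the self-dual $S_2$ estimates, which give the $N^{-1/2}$ rate. Combining the two regimes produces the factor $N^{-\min\{1/2,1/p\}}$ together with a constant of the stated form $B^2C_x+2BC_xC_y$.

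The hard part is the quadratic term. It is genuinely more delicate than the linear one because $\norm{Tx}^2$ is quadratic in $T$: one must pass through $T^*T$, invoke the identity $\norm{T^*T}_{S_{p/2}}=\norm{T}_{S_p}^2$, and contend with the fact that $S_{p/2}$ is only a quasi-norm when $p<2$ (so its H\"older conjugate is not usable, which is exactly what forces the reduction to $S_2$ in that regime) and that $q_2$ may lie on either side of $2$. The conceptual heart of the whole proof, however, is the single step that the empirical operators $S$ and $M$ have rank at most $N$, so that the finite-dimensional $\ell^q$-versus-$\ell^2$ comparison contributes exactly the factor $N^{1/r-1/2}$ responsible for the $p$-dependent rate $N^{-\min\{1/2,1/p\}}$.
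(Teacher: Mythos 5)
Your proposal is correct, and it shares the paper's skeleton---expand the square, kill the $\|y_n\|^2$ term by Jensen and sign orthogonality, convert the two $T$-dependent terms to traces against the empirical rank-$\le N$ sign operators, apply Schatten--H\"older duality, and extract the $p$-dependent rate from a finite-rank $S_r$-versus-$S_2$ (or $S_1$) comparison---but it diverges in two technical choices, both legitimate. First, for the quadratic term the paper keeps the \emph{same} conjugate pair $(p,q)$ as in the linear term by writing $|\mathrm{Tr}(T^*T\,T_{xx})|\le\|T^*T\|_{S_p}\|T_{xx}\|_{S_q}$ and using $\|T^*T\|_{S_p}=\|T\|_{S_{2p}}^2\le\|T\|_{S_p}^2\le B^2$; this one-line trick sidesteps exactly the two complications your pairing of $S_{p/2}$ against $S_{q_2}$, $q_2=p/(p-2)$, forces on you, namely the case split on whether $q_2$ exceeds $2$ and the separate reduction to $\mathcal{T}_p\subset\mathcal{T}_2$ when $p<2$ where $S_{p/2}$ is only a quasi-norm. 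Second, to bound $\E\|T_{xx}\|_{S_q}$ and $\E\|T_{yx}\|_{S_q}$ the paper pushes $\E_\sigma$ inside the trace using concavity of $A\mapsto\mathrm{Tr}(A^{q/2})$ for $1\le q<2$ (a trace-convexity lemma cited from Petz), computes $\E_\sigma T_{xx}T_{xx}^*=\sum_n\|x_n\|^2 x_n x_n^*$, and only then applies the rank-$N$ comparison; your route via $\E\|S\|_{S_2}\le(\E\|S\|_{S_2}^2)^{1/2}$ and exact orthogonality of Rademacher signs in the Hilbert--Schmidt inner product is more elementary, avoids operator convexity entirely, and yields the same rates (and applying the rank-$N$ comparison to the non-self-adjoint $S$ is fine, since the comparison is just H\"older on the singular value sequence, even though the paper states its lemma for non-negative operators). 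One minor point in your favor: your constants $C_y^2/\sqrt{N}$ and $B^2C_x^2$ are what the computation actually produces; the $C_y$ and $B^2C_x$ in the theorem statement appear to be typos, as the paper's own displayed derivation ends with $\sqrt{\E\|y\|^4}\le C_y^2$ and $\sqrt{\E\|x\|^4}\le C_x^2$.
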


\begin{remark}
    \label{rmk:bounded}
    As already mentioned, the bound becomes worse as $p$ grows large, and it breaks down for $p \to \infty$. This makes intuitive sense: $p = \infty$ only tells us that the singular values are bounded. It includes situations where all singular values are equal to $B$ and thus approximating any finite number of singular vectors leaves room for arbitrarily large errors.
\end{remark}

\begin{remark}
    The bound saturates for $p = 2$ and does not improve for $p < 2$. While this is likely an artifact of our proof technique, in terms of generalization error it is order optimal, due to the last $O(N^{-1/2})$ term in Theorem \ref{thm:learnability_theorem}.
\end{remark}

The bounds from Theorem \ref{thm:learnability_theorem} are illustrated in\Cref{fig:excess_risk}.

Our proof of Theorem \ref{thm:vanishing_rademacher_complexity} uses properties of certain random finite-rank operators constructed from training data. 

For notational convenience, let us define the operator $x_a x_b^*: \mathcal{H}_b \rightarrow \mathcal{H}_a$ such that for every $x \in \mathcal{H}_b$, we have  $x_a x_b^* x = \langle x_b ,  x \rangle x_a $. To prove Theorem \ref{thm:vanishing_rademacher_complexity}, we have to show that the operators $\sum_{n=1}^{N}{\sigma_n x_n x_n^*}$ and $\sum_{n=1}^{N}{\sigma_n y_n x_n^*}$, with $\sigma_n$ being i.i.d Rademacher random variables, are well-behaved in the sense that they do not grow too fast with the number of training samples $N$. This is the subject of the following lemma.

\begin{lemma}
\label{lem:random-op}
Let $(x_n,y_n)_{n=1}^{N}$ be independently selected according to the probability measure $P \in \mathcal{P}(\mathcal{Z})$ and $q \geq 1$. Define random operators $T_{xx}  := \sum_{n=1}^{N}{\sigma_n x_n x_n^*}$ and $T_{yx}  := \sum_{n=1}^{N}{\sigma_n y_n x_n^*}$. We have
\begin{align*}
&\E \| T_{xx}\|_{S_q} \leq  N^{\max\{ \frac{1}{2}, \frac{1}{q}\}}\sqrt{ \E  \| x\|^4  }, \\
&\E \|T_{yx} \|_{S_q} \leq  N^{\max\{ \frac{1}{2}, \frac{1}{q}\}}\sqrt{ \E  \| x\|^2 \| y\|^2  }.
\end{align*}
\end{lemma}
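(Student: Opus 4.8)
The plan is to reduce both estimates to the Hilbert--Schmidt case $q=2$, where the expected squared norm can be evaluated exactly, and then transfer the bound to a general $q$ using the finite-rank structure of the two operators. The key structural fact is that each summand $x_n x_n^*$ and $y_n x_n^*$ is rank one, so $T_{xx}$ and $T_{yx}$ both have rank at most $N$. For any operator $T$ of rank at most $r$, the elementary $\ell^p$ inequalities on its singular value sequence give $\|T\|_{S_q}\leq\|T\|_{S_2}$ when $q\geq 2$ (since $\|s\|_{\ell^q}\leq\|s\|_{\ell^2}$) and $\|T\|_{S_q}\leq r^{1/q-1/2}\|T\|_{S_2}$ when $1\leq q<2$ (by H\"older). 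With $r\leq N$ these combine into the single inequality $\|T\|_{S_q}\leq N^{\max\{1/2,1/q\}-1/2}\|T\|_{S_2}$, precisely the prefactor needed to upgrade an $N^{1/2}$ bound on the $S_2$ norm to the claimed $N^{\max\{1/2,1/q\}}$.

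Next I would compute the Hilbert--Schmidt norm in expectation using $\|T\|_{S_2}^2=\mathrm{Tr}(T^*T)$ and the identities $x_m^* x_n=\inprod{x_m,x_n}$ and $\mathrm{Tr}(x_m x_n^*)=\inprod{x_n,x_m}$. A direct expansion gives
\[
\|T_{xx}\|_{S_2}^2=\sum_{m,n}\sigma_m\sigma_n|\inprod{x_m,x_n}|^2,\qquad \|T_{yx}\|_{S_2}^2=\sum_{m,n}\sigma_m\sigma_n\inprod{y_m,y_n}\inprod{x_n,x_m}.
\]
Taking the expectation over the Rademacher signs and using $\E[\sigma_m\sigma_n]=\delta_{mn}$ annihilates every off-diagonal term, leaving $\sum_n\|x_n\|^4$ and $\sum_n\|x_n\|^2\|y_n\|^2$. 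Averaging over the i.i.d.\ data then yields the exact identities $\E\|T_{xx}\|_{S_2}^2=N\,\E\|x\|^4$ and $\E\|T_{yx}\|_{S_2}^2=N\,\E(\|x\|^2\|y\|^2)$.

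Finally I would assemble the estimate. Jensen's inequality (concavity of the square root) turns these second moments into $\E\|T_{xx}\|_{S_2}\leq N^{1/2}\sqrt{\E\|x\|^4}$ and $\E\|T_{yx}\|_{S_2}\leq N^{1/2}\sqrt{\E(\|x\|^2\|y\|^2)}$, and multiplying by the deterministic rank factor from the first step gives both bounds of the lemma. I expect the only genuine subtlety to be the regime $1\leq q<2$: there is no dimension-free inequality of the form $\|T\|_{S_q}\leq C\|T\|_{S_2}$ for arbitrary compact operators, so the argument must exploit $\mathrm{rank}\leq N$, and one has to verify that the exponent $1/q-1/2$ coming from H\"older combines with the $N^{1/2}$ from the $S_2$ computation to produce exactly $\max\{1/2,1/q\}$. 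The trace identities and the Rademacher cancellation are then routine.
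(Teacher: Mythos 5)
Your proof is correct, and it takes a genuinely different route from the paper's. The paper splits into two regimes and, for $1 \leq q < 2$, applies an operator-level Jensen step: it invokes a trace-convexity result of Petz (concavity of $A \mapsto \mathrm{Tr}(A^{q/2})$ on positive finite-rank operators) to pull $\E_\sigma$ inside, reducing to the deterministic operator $\sum_{n}\|x_n\|^2 x_n x_n^*$, and only then applies the rank-based H\"older comparison, in the $S_{q/2}$ quasi-norm against $S_1$. You instead apply the rank comparison \emph{pathwise} to the random operator itself, $\|T\|_{S_q} \leq N^{\max\{1/2,1/q\}-1/2}\|T\|_{S_2}$ for $\mathrm{rank}(T) \leq N$, and then compute $\E_\sigma \|T_{xx}\|_{S_2}^2$ and $\E_\sigma \|T_{yx}\|_{S_2}^2$ exactly via Rademacher orthogonality, finishing with scalar Jensen. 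This buys several things: you avoid the trace-convexity machinery and the $S_{q/2}$ quasi-norm entirely (only H\"older on an $N$-term singular value sequence and Cauchy--Schwarz are needed), you treat both regimes $q \geq 2$ and $1 \leq q < 2$ in one unified computation rather than two case analyses, and you handle $T_{yx}$ explicitly, whereas the paper writes out only the $T_{xx}$ case. The constants match exactly: for $q < 2$ your factor $N^{1/q-1/2}\cdot N^{1/2} = N^{1/q}$ agrees with the paper's bound. One small remark: the paper's H\"older lemma is stated for non-negative operators, while you apply the comparison to $T_{xx}$ (self-adjoint but sign-indefinite) and $T_{yx}$ (not self-adjoint); this is harmless, since the inequality concerns only the $\ell^q$ versus $\ell^2$ norms of a singular value sequence with at most $N$ nonzero entries, but it is worth noting that you are using this slightly stronger form. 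You correctly identified the only genuine subtlety, namely that the $q<2$ case has no dimension-free $S_q$-to-$S_2$ comparison and must exploit the rank bound.
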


These growth rates then imply the stated bounds on the Rademacher complexity.

\begin{figure}
  \includegraphics[width=.7\linewidth]{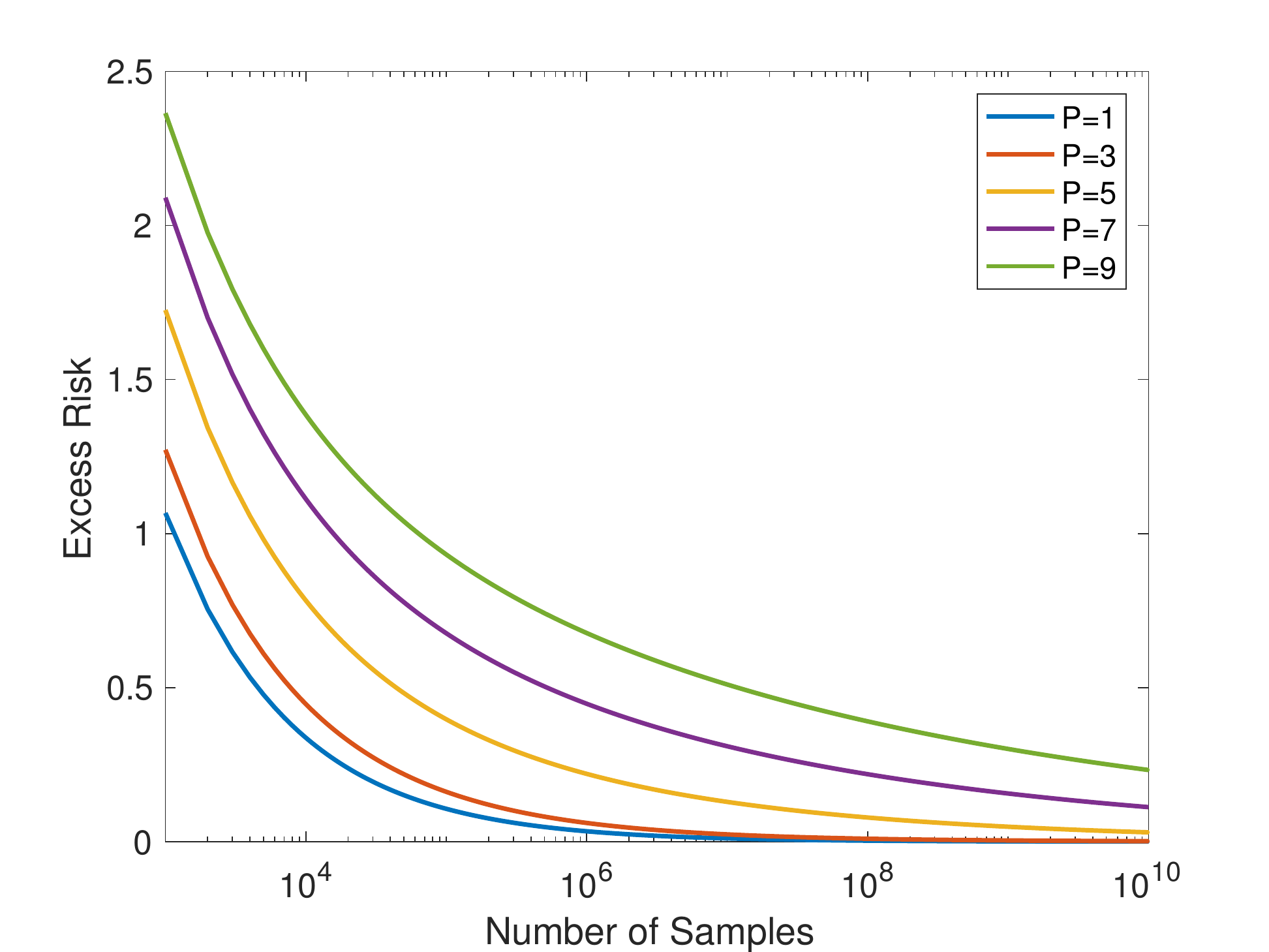}
  \centering
  \caption{Excess risk vs. the number of samples required to estimate $T \in \mathcal{T}_p$, with $\delta = 10^{-3}$.}
  \label{fig:excess_risk}
\end{figure}

\section{The Learning Algorithm} \label{sec:learning_algorithm}

In the developments so far we have not discussed the practicalities of learning infinite-dimensional operators. The ERM for $\wh{T}_N$ assumes we can optimize over an infinite-dimensional, non-compact class. We address this problem by adapting the results of \cite{abernethy2009new}, who show for a different loss function that ERM for compact operators can be transformed into an optimization over rank-$N$ operators.

Denote by $\mathcal{X}$ the linear span of $\set{x_1, \ldots, x_N}$, and by $\mathcal{Y}$ the linear span of $\set{y_1, \ldots, y_N}$. Let further $\Pi_{\mathcal{X}}$ be a projection onto $\mathcal{X}$ and analogously for $\mathcal{Y}$. Then, we have the following simple result:

\begin{lemma}
\label{lem:projected_loss}
Let $\Pi_\mathcal{X}: \mathcal{H}_x \rightarrow \mathcal{X}$ and $\Pi_\mathcal{Y}: \mathcal{H}_y \rightarrow \mathcal{Y}$ be projection operators. Then, we have $\calL(y, \Pi_\mathcal{Y} T \Pi_\mathcal{X} x) \leq \calL(y, Tx)$ for any operator $T:\mathcal{H}_x \rightarrow \mathcal{H}_y$ and all $(x, y) \in \mathcal{X} \times \mathcal{Y}$.
\end{lemma}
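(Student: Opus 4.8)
The plan is to reduce the loss inequality to a single application of the Pythagorean theorem, after first eliminating the input-side projection. Before anything else I would fix the reading of the hypothesis: $\Pi_\mathcal{X}$ and $\Pi_\mathcal{Y}$ are to be taken as the \emph{orthogonal} projections onto the finite-dimensional data subspaces $\mathcal{X} = \mathrm{span}\{x_1,\dots,x_N\}$ and $\mathcal{Y} = \mathrm{span}\{y_1,\dots,y_N\}$; this is the implicit convention that makes the statement correct. Since the lemma restricts attention to $(x,y) \in \mathcal{X} \times \mathcal{Y}$, the input $x$ already lies in the range of $\Pi_\mathcal{X}$, so idempotence gives $\Pi_\mathcal{X} x = x$ and hence $\Pi_\mathcal{Y} T \Pi_\mathcal{X} x = \Pi_\mathcal{Y} T x$. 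The claim therefore collapses to showing $\norm{y - \Pi_\mathcal{Y} T x}^2 \le \norm{y - T x}^2$.

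For this remaining inequality I would decompose the vector $T x \in \mathcal{H}_y$ into its component in $\mathcal{Y}$ and its component in the orthogonal complement, $T x = \Pi_\mathcal{Y} T x + (I - \Pi_\mathcal{Y}) T x$. Because $y \in \mathcal{Y}$ and $\Pi_\mathcal{Y} T x \in \mathcal{Y}$, the difference $y - \Pi_\mathcal{Y} T x$ lies in $\mathcal{Y}$, whereas $(I - \Pi_\mathcal{Y}) T x$ lies in $\mathcal{Y}^\perp$, so the two are orthogonal. The Pythagorean identity then yields $\norm{y - T x}^2 = \norm{y - \Pi_\mathcal{Y} T x}^2 + \norm{(I - \Pi_\mathcal{Y}) T x}^2$, and discarding the nonnegative second term gives the desired bound, with equality exactly when $T x \in \mathcal{Y}$.

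There is essentially no hard step here; the single point requiring care is the orthogonality convention. Both ingredients of the argument, the identity $\Pi_\mathcal{X} x = x$ on the range and the orthogonality of the residual $(I - \Pi_\mathcal{Y}) T x$ to $\mathcal{Y}$, rely on the projections being orthogonal, and an oblique projection would break the Pythagorean step and the inequality itself. I would therefore state this assumption explicitly at the outset. The payoff is that the lemma licenses restricting the ERM to operators of the form $\Pi_\mathcal{Y} T \Pi_\mathcal{X}$: applying it with $(x,y) = (x_n, y_n)$ for each training pair shows that projection never increases any per-sample loss $\calL(y_n, T x_n)$, hence never increases the empirical risk, reducing the search to finite-rank operators supported on the data.
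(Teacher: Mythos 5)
Your proof is correct and is essentially the paper's argument: both use $\Pi_\mathcal{X}x = x$ and $\Pi_\mathcal{Y}y = y$ to reduce the claim to $\norm{\Pi_\mathcal{Y}(y-Tx)}^2 \leq \norm{y-Tx}^2$, which the paper invokes as contractivity of the orthogonal projection and you justify by writing out the underlying Pythagorean decomposition. Your explicit flagging that the projections must be orthogonal (an assumption the paper leaves implicit) is a fair observation but does not change the route.
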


\begin{proof}
Let $(x, y) \in \mathcal{X} \times \mathcal{Y}$. We have
\begin{align*}
    \calL(y, \Pi_\mathcal{Y} T \Pi_\mathcal{X} x) 
    &= \norm{y - \Pi_\mathcal{Y} T \Pi_\mathcal{X} x}^2 \\
    &= \norm{\Pi_\mathcal{Y} (y - T x) }^2 \\
    &\leq \calL(y,Tx).
\end{align*}
\end{proof}
\Cref{lem:projected_loss} implies that for any $T \in \mathcal{T}_p$
\[
    J(\Pi_\mathcal{Y} T \Pi_\mathcal{X}) \leq J(T),
\]
where $J(T) = \frac{1}{N}\sum_{n=1}^{N}{\mathcal{L}(y_n,T x_n)}$ is the empirical risk of $T$. 

We use this simple result to show that the $\argmin$ in ERM is well-defined and can be achieved by solving a finite dimensional convex program over a compact set.

By definition of $\inf$, for any $\epsilon > 0$ there exists $\widehat{T}_{\epsilon} \in \mathcal{T}_p$ such that
\begin{equation*}
\inf_{T \in \mathcal{T}_p} J(T) \leq J(\widehat{T}_{\epsilon}) \leq \inf_{T \in \mathcal{T}_p} J(T) + \epsilon.
\end{equation*}
Let us choose $\widehat{T}_{\epsilon_n} \in \mathcal{T}_p$ for a sequence $(\epsilon_n)$ such that $\epsilon_n \to 0$. By construction, we have
\begin{equation*}
\lim_{n \rightarrow +\infty} J(\widehat{T}_{\epsilon_n}) = \inf_{T \in \mathcal{T}_p}  J(T).
\end{equation*}
From \Cref{lem:projected_loss},
 \begin{equation}\label{eq:T_seq}
\lim_{n \rightarrow +\infty} J(\Pi_\mathcal{Y} \widehat{T}_{\epsilon_n} \Pi_\mathcal{X}) \leq \inf_{T \in \mathcal{T}_p}  J(T).
\end{equation}

On the other hand, for a compact operator $T:\mathcal{H}_x \rightarrow \mathcal{H}_y$, we have \cite{abernethy2009new}
\begin{equation*}
\sigma_n(\Pi_\mathcal{Y} T \Pi_\mathcal{X}) \leq \sigma_n(T), ~ \forall n \in \mathbb{N},
\end{equation*}
Therefore, 
\[
    \norm{\Pi_\mathcal{Y} \widehat{T}_{\epsilon_n} \Pi_\mathcal{X}}_{S_p} \leq \norm{\widehat{T}_{\epsilon_n}}_{S_p} \leq B,
\]
so that $\Pi_\mathcal{Y} \widehat{T}_{\epsilon_n} \Pi_\mathcal{X} \in \mathcal{T}_p$ is feasible for the ERM.

Consider an optimization over the set of finite rank operators $\mathcal{T}_N \bydef \set{\Pi_\mathcal{Y} T \Pi_\mathcal{X}: T \in \mathcal{T}_p} \subset \mathcal{T}_p$. From \eqref{eq:T_seq}, we have
\begin{equation*}
\inf_{T \in \mathcal{T}_N }J(T) \leq \lim_{n \rightarrow +\infty} J(\Pi_\mathcal{Y} \widehat{T}_{\epsilon_n} \Pi_\mathcal{X}) \leq \inf_{T \in \mathcal{T}_p }J(T).
\end{equation*}
Since $\mathcal{T}_{N}$ is a finite-dimensional set isometric to $N \times N$ matrices, and a closed ball in finite dimension is compact, this optimization is over a compact set and the minimum is achieved. In summary,
\begin{equation*}
\min_{T \in \mathcal{T}_N }J(T) = \inf_{T \in \mathcal{T}_p }J(T).
\end{equation*}

From the definition $\mathcal{T}_N$, any $T \in \mathcal{T}_N $ can be written as
\begin{equation*}
T = \sum_{i=1}^{N_y} \sum_{j=1}^{N_x} \alpha_{i, j} v_{i} u_{j}^*,
\end{equation*}
where $(u_1, \cdots , u_{N_x})$ and $(v_1, \cdots , v_{N_y})$ are two sets of complete orthonormal bases for linear subspaces $\mathcal{X}$ and $\mathcal{Y}$, and $\alpha_{i,j} \in \C$. Hence, we can identify $T$ with a matrix $\mT = (\alpha_{i, j}) \in \mathbb{C}^{N_y \times N_x}$. Similarly, $x_n \in \mathcal{X}$ and  $y_n \in \mathcal{Y}$ can be represented as vectors $\vx_n \in \mathbb{C}^{N_x}$ and $\vy_n \in \mathbb{C}^{N_y}$.  With this notation, we can implement ERM via a finite-dimensional optimization in \Cref{alg:ERM}.

\begin{algorithm}
\caption{Finite Dimensional Equivalent of ERM}
\label{alg:ERM}
\begin{algorithmic} 
\STATE {\bfseries Input:} $(x_n, y_n)_{n=1}^{N}$
\STATE {\bfseries Compute:} Basis functions $(u_1, \cdots, u_{N_x})$ and $(v_1, \cdots, v_{N_y})$, and vectors $(\vx_n,\vy_n)_{n=1}^{N}$.
\STATE {\bfseries Solve:} \begin{equation*}
\widehat{\mT}_N = \argmin_{\mT \in \C^{N_y \times N_x}: \| \mT \|_{S_p} \leq B} \frac{1}{N} \sum_{n=1}^{N} \mathcal{L} (\vy_n,\mT\vx_n).
\end{equation*}
\STATE {\bfseries Output:}  $\widehat{T}_N = \sum_{i=1}^{N_y} \sum_{j=1}^{N_x} \alpha_{i, j} v_{i} u_{j}^*, $ where $(\alpha_{i,j}) := \widehat{\mT}_N$.
\end{algorithmic}
\end{algorithm}

\section{Proofs of Formal Results} \label{sec:proofs}

\subsection[Proof of Learnability Theorem]{Proof of Theorem~\ref{thm:learnability_theorem}}

We define the function class $\mathcal{F}$ as
\begin{equation} \label{eq:F}
\mathcal{F} := \{f: \mathcal{H}_x \times \mathcal{H}_y \rightarrow \R^{+}: f(z) = \mathcal{L}(y,Tx), T \in \mathcal{T}_p \},
\end{equation}
where $z := (x, y)$, and adopt the following simplifying notations:
\begin{align*}
\hspace{2mm}&f(z) &:=&&& \mathcal{L}(y , Tx) \\
&P(f) &:=&&& \E_P [f] \\
&\widehat{f}_{N} &:=&&& \argmin_{f \in \mathcal{F}} P_N(f) 
:= \argmin_{f \in \mathcal{F}} \frac{1}{N} \sum_{n} f(z_n) \\
&\mathcal{L}^*(\mathcal{F}) &:=&&& \inf_{f \in \mathcal{F}} P(f) \\
&\| P - P^{'} \|_{\mathcal{F}} \hspace{-3mm} &:=&&& \hspace{-3mm} \sup_{f \in \mathcal{F}} | P(f) - P^{'} (f)|.
\end{align*}

As we mentioned before, the existence of the minimizer in the definition of $\wh{f}_N$ follows from the discussion in Section \ref{sec:learning_algorithm}.

The following known result bounds the empirical risk in terms of a quantity called the uniform deviation $\Delta_N$, which measures maximum discrepancy between empirical and generalized risks among the elements of the hypothesis class.

\begin{lemma} \label{lem:mismatched_minimization}
The empirical risk minimization (ERM) algorithm satisfies the following inequality:
\begin{equation*}
P(\widehat{f}_N) \leq \calL^{*}(\mathcal{F}) + 2\Delta_N(z^N)
\end{equation*}
where $\Delta_N (z^N) = \| P_N - P \|_{\mathcal{F}}$ is the uniform deviation for a function class $\mathcal{F}$ and given sequence of random samples, $z^N := (z_1, \cdots, z_N)$, \cite{shalev2014understanding}.
\end{lemma}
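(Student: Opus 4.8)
The plan is to use the standard \emph{add-and-subtract} decomposition, which turns the statement into a short chain of inequalities driven by two facts: the uniform deviation controls the gap between true and empirical risk for \emph{every} hypothesis, and $\widehat{f}_N$ is by definition the empirical-risk minimizer. No special structure of $\mathcal{T}_p$ is needed here; the argument is purely about the abstract function class $\mathcal{F}$.

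First I would write $P(\widehat{f}_N) = \big(P(\widehat{f}_N) - P_N(\widehat{f}_N)\big) + P_N(\widehat{f}_N)$ and bound the first bracket by the uniform deviation. Since $P(\widehat{f}_N) - P_N(\widehat{f}_N) \leq \sup_{f \in \mathcal{F}} |P(f) - P_N(f)| = \Delta_N(z^N)$, we obtain $P(\widehat{f}_N) \leq \Delta_N(z^N) + P_N(\widehat{f}_N)$. The next step exploits optimality: because $\widehat{f}_N$ minimizes $P_N$ over $\mathcal{F}$, we have $P_N(\widehat{f}_N) \leq P_N(f)$ for every $f \in \mathcal{F}$, so $P(\widehat{f}_N) \leq \Delta_N(z^N) + P_N(f)$.

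To close the loop I would insert $\pm P(f)$ a second time, writing $P_N(f) = \big(P_N(f) - P(f)\big) + P(f) \leq \Delta_N(z^N) + P(f)$, again by definition of the uniform deviation. Combining the two bounds gives $P(\widehat{f}_N) \leq 2\Delta_N(z^N) + P(f)$, valid for \emph{all} $f \in \mathcal{F}$. Since the left-hand side does not depend on $f$, I can take the infimum over $f \in \mathcal{F}$ on the right, yielding $P(\widehat{f}_N) \leq 2\Delta_N(z^N) + \inf_{f \in \mathcal{F}} P(f) = \calL^{*}(\mathcal{F}) + 2\Delta_N(z^N)$, which is the claim.

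There is no genuine obstacle in this lemma; the only points requiring a word of care are bookkeeping rather than mathematical. One should note that $\widehat{f}_N$ is assumed to exist (its existence is supplied by \Cref{sec:learning_algorithm}), so that the inequality $P_N(\widehat{f}_N) \leq P_N(f)$ is legitimate; and one should keep the final step as an infimum (not a minimum), since $\calL^{*}(\mathcal{F})$ is defined as $\inf_{f \in \mathcal{F}} P(f)$ and need not be attained. The fact that the intermediate bound $P(\widehat{f}_N) \leq 2\Delta_N(z^N) + P(f)$ holds uniformly in $f$ is exactly what permits passing to this infimum.
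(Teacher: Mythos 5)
Your proof is correct and follows essentially the same route as the paper: the same add-and-subtract decomposition, bounding two brackets by the uniform deviation $\Delta_N(z^N)$ and using optimality of $\widehat{f}_N$ for the middle term. The only cosmetic difference is how the non-attained infimum is handled --- the paper picks an explicit $\epsilon$-approximate minimizer $f_\epsilon^*$ and lets $\epsilon \to 0$, while you keep $f$ arbitrary and take $\inf_{f \in \mathcal{F}}$ at the end; these are logically equivalent, and you correctly flag the same subtlety (non-attainment over the non-compact class) that the paper's proof is designed to address.
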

\begin{proof}
The proof is standard up to a modification due to the noncompactness of our $\calF$.
Let $f_\epsilon^* \in \calF$ be such that $P(f_\epsilon^*) - \calL^*(\calF) \leq \epsilon$. We have, 
\begin{align*}
    P(\widehat{f}_N) - \calL^{*}(\mathcal{F}) 
    &= P(\widehat{f}_N) - P_N(\widehat{f}_N) + P_N(\widehat{f}_N) - P_N(f_\epsilon^*) \\ 
    &+ P_N(f_\epsilon^*)- P(f_\epsilon^*) + P(f_\epsilon^*) - \calL^*(\calF) \\
    &\leq 2\Delta_N(z^N) + P(f_\epsilon^*) - \calL^*(\calF),
\end{align*}
since the first and the third bracketed term are bounded by $\Delta_N(z^N)$ by definition, and the third term is negative because $P_N$ is minimized by $\wh{f}_N$.
The conclusion follows from the fact that we can find $f_\epsilon^*$ such that $P(f_\epsilon^*) - \calL^*(\calF) \leq \epsilon$ for any $\epsilon > 0$ and that 
\[
    P(f_\epsilon^*) - \calL^*(\calF) \stackrel{\epsilon \to 0}{\longrightarrow} 0
\]
\end{proof}
We want to find a probabilistic upper bound for the uniform deviation. This can be achieved by showing $\Delta_N (z^N)$ has the bounded difference property and using McDiarmid's inequality.
\begin{lemma}
Let $P$ be a probability measure on a set $\mathcal{Z}$ and suppose that $z_1, \cdots, z_N$ and $\bar{z}_1, \cdots, \bar{z}_i, \cdots, \bar{z}_N$ are independent samples selected according to $P$. Define $z^N = z_1, \cdots, z_N$ and $z^N_{\bar{i}} = z_1, \cdots, \bar{z}_{i}, \cdots, z_N$. Then,
\begin{equation*}
|\Delta_N(z^N) - \Delta_N(z^N_{\bar{i}})| \leq \frac{1}{N} (C_y + BC_x)^2
\end{equation*}
for the function class $\mathcal{F}$ in \cref{eq:F}.
\end{lemma}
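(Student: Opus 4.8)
The plan is to verify the bounded-difference (McDiarmid) condition directly, by exploiting the uniform boundedness of the loss over the class $\mathcal{F}$. The first step is to show that every $f \in \mathcal{F}$ is bounded pointwise by $(C_y + BC_x)^2$ on the almost-sure support of the samples. The key fact is the monotonicity of Schatten norms: for $T \in \mathcal{T}_p$ the operator norm is controlled by the Schatten $p$-norm, since $\|T\|_{\mathrm{op}} = s_1(T) \leq \big(\sum_{k} s_k(T)^p\big)^{1/p} = \|T\|_{S_p} \leq B$. Hence, for any sample $z = (x,y)$ with $\|x\| \leq C_x$ and $\|y\| \leq C_y$,
\[
    0 \leq f(z) = \|y - Tx\|^2 \leq \big(\|y\| + \|T\|_{\mathrm{op}}\|x\|\big)^2 \leq (C_y + BC_x)^2 .
\]

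Next I would quantify the effect of replacing a single coordinate. Writing $P_N(f) = \frac{1}{N}\sum_{n=1}^N f(z_n)$ and letting $P_N^{\bar{i}}$ denote the empirical measure associated with $z^N_{\bar{i}}$ (which differs from $z^N$ only in the $i$-th sample), we have $P_N(f) - P_N^{\bar{i}}(f) = \frac{1}{N}\big(f(z_i) - f(\bar{z}_i)\big)$. Since both $f(z_i)$ and $f(\bar{z}_i)$ lie in $[0, (C_y + BC_x)^2]$ by the bound above, this yields the uniform estimate
\[
    \sup_{f \in \mathcal{F}} \big| P_N(f) - P_N^{\bar{i}}(f) \big| \leq \frac{1}{N}(C_y + BC_x)^2 .
\]
Note that the population measure $P$ is unaffected by the coordinate swap.

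The last step uses that the supremum is nonexpansive. For each $f$ set $g(f) = |P_N(f) - P(f)|$ and $h(f) = |P_N^{\bar{i}}(f) - P(f)|$, so that $\Delta_N(z^N) = \sup_f g(f)$ and $\Delta_N(z^N_{\bar{i}}) = \sup_f h(f)$. The reverse triangle inequality gives $|g(f) - h(f)| \leq |P_N(f) - P_N^{\bar{i}}(f)|$, whence $\sup_f |g(f) - h(f)| \leq \frac{1}{N}(C_y + BC_x)^2$ by the previous display. Combining this with the elementary inequality $|\sup_f g(f) - \sup_f h(f)| \leq \sup_f |g(f) - h(f)|$ gives $|\Delta_N(z^N) - \Delta_N(z^N_{\bar{i}})| \leq \frac{1}{N}(C_y + BC_x)^2$, as claimed.

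I do not expect a genuine obstacle: this is a textbook bounded-difference computation. The only step requiring care is the uniform loss bound, whose substance is that the hypothesis-class constraint $\|T\|_{S_p} \leq B$ must be converted into an operator-norm bound via $\|T\|_{\mathrm{op}} \leq \|T\|_{S_p}$; everything afterward is triangle-inequality bookkeeping.
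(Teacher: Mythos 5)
Your proof is correct, and while it follows the same bounded-difference blueprint as the paper's (a uniform bound on the loss over $\mathcal{T}_p$, followed by nonexpansiveness of the supremum under a single-coordinate swap), it differs in two details, both to your advantage. First, where the paper evaluates $\sup_{T \in \mathcal{T}_p}\mathcal{L}(y,Tx)$ exactly by exhibiting the extremal rank-one operator $\widehat{T} = -\frac{B}{\|y\|\|x\|}\,y x^*$, you obtain the same value $(C_y+BC_x)^2$ from the monotonicity $\|T\|_{\mathrm{op}} = s_1(T) \leq \|T\|_{S_p} \leq B$ together with the triangle inequality; this is equally tight (the paper's extremal operator attains your bound) and sidesteps the degenerate case $\|x\|\|y\| = 0$, where the paper's maximizer is undefined. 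Second, and more substantively, the paper bounds $|f(z_i) - f(\bar{z}_i)|$ by the \emph{sum} $\sup_{T}\mathcal{L}(\bar{y}_i, T\bar{x}_i) + \sup_{T}\mathcal{L}(y_i, Tx_i)$, so its displayed chain ends at $\frac{2}{N}(C_y+BC_x)^2$ --- a factor of $2$ weaker than the constant stated in the lemma itself. You instead use that both $f(z_i)$ and $f(\bar{z}_i)$ lie in the interval $[0, (C_y+BC_x)^2]$, so their difference is at most $(C_y+BC_x)^2$ in absolute value; this recovers exactly the claimed $\frac{1}{N}(C_y+BC_x)^2$ and in effect repairs the factor-of-two slip in the paper's own derivation (with your constant, the subsequent McDiarmid tail bound $\exp\{-Nt^2/(C_y+BC_x)^4\}$ stated in the paper does follow, indeed with room to spare). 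The remaining steps --- the reverse triangle inequality and $|\sup_f g(f) - \sup_f h(f)| \leq \sup_f |g(f)-h(f)|$ --- coincide with the paper's inequalities (a) and (b).
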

\begin{proof}
Define $P_{N_{\bar{i}}} = \frac{1}{N}\sum_{n=1}^{N}{f(z_n)} - \frac{1}{N}(f(z_i)-f(\bar{z}_i))$. 
\begin{align*}
&|\Delta_N(z^N) - \Delta_N(z^N_{\bar{i}})| \stackrel{\text{def.}}{=}  | \| P_N - P \|_{\mathcal{F}} - \| P_{N_{\bar{i}}} - P \|_{\mathcal{F}} | \\
&= \| P_{N_{\bar{i}}} + \frac{1}{N}f(z_i)-\frac{1}{N}f(\bar{z}_i)-  P \|_{\mathcal{F}} - \| P_{N_{\bar{i}}} - P \|_{\mathcal{F}} | \\
&\stackrel{\text{(a)}}{\leq}  \frac{1}{N}\|f(\bar{z}_i) -f(z_i) \|_{\mathcal{F}} \\
&\stackrel{\text{(b)}}{\leq}  \frac{1}{N} \sup_{T \in \mathcal{T}_p} \mathcal{L}(\bar{y}_i, T\bar{x}_i) + \frac{1}{N}\sup_{T \in \mathcal{T}_p} \mathcal{L}(y_i, Tx_i) \\
&\stackrel{\text{(c)}}{\leq} \frac{2}{N} (C_y + BC_x)^2,
\end{align*}
where $\text{(a)}$ and $\text{(b)}$ are due to triangle inequality. Finally, supremum in $\sup_{T \in \mathcal{T}_p} \mathcal{L}(y, Tx)$ is achieved by $$\widehat{T}= -\frac{B}{\|y \| \|x\|} y x^* \in \mathcal{T}_p$$ which gives us inequality $\text{(c)}$.
\end{proof}
Now, McDiarmid's inequality gives us a bound for tail probability of $\Delta_N(z^N)$ as
\begin{equation*}
\mathrm{P}(\Delta_N(z^N) \geq \E \Delta_N(z^N) + t) \leq \exp \{-\frac{Nt^2}{(C_y + BC_x)^4} \}.
\end{equation*}
Finally, we conclude the proof by choosing $$t = (C_y+ BC_x)^2 \sqrt{\frac{1}{N}\log{\frac{1}{\delta}}},$$ and using the following theorem:
\begin{theorem}
\cite{vapnik2015uniform, gine1984some} Fix a space $\mathrm{Z}$ and let $\mathcal{F}$ be a class of functions. Then for any probability measure $P \in \mathcal{P}(\mathrm{Z})$
\begin{equation*}
\E \Delta_N(z^N) \leq 2 R_N(\mathcal{F}).
\end{equation*}
\end{theorem}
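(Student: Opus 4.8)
The plan is to use the classical \emph{symmetrization} argument, in which the population mean $P(f)$ is replaced by the empirical mean over an independent ``ghost'' sample, after which the resulting exchangeability is exploited to bring in Rademacher signs. I would introduce $z'_1, \ldots, z'_N$, an i.i.d.\ copy of $z_1, \ldots, z_N$ drawn from $P$ and independent of everything else, and write $P'_N(f) = \frac{1}{N}\sum_{n=1}^N f(z'_n)$ for its empirical average. Since $P(f) = \E_{z'}[P'_N(f)]$ for every fixed $f$, and $P_N(f)$ does not depend on the ghost sample, we can write
\begin{equation*}
\E \Delta_N(z^N) = \E_z \sup_{f \in \mathcal{F}} \big| \E_{z'}\big[P'_N(f) - P_N(f)\big]\big|.
\end{equation*}

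First I would move the inner expectation outside the supremum. Using the convexity of the absolute value (Jensen's inequality) followed by the elementary bound $\sup_f \E_{z'}[\,\cdot\,] \leq \E_{z'}\sup_f[\,\cdot\,]$, we obtain
\begin{equation*}
\E \Delta_N(z^N) \leq \E_{z,z'} \sup_{f \in \mathcal{F}} \Big| \frac{1}{N}\sum_{n=1}^N \big(f(z'_n) - f(z_n)\big) \Big|.
\end{equation*}
Next I would insert i.i.d.\ Rademacher variables $\sigma_1, \ldots, \sigma_N$. The key observation is that, since $z_n$ and $z'_n$ are identically distributed and independent, swapping $z_n \leftrightarrow z'_n$ negates the $n$-th summand while leaving the joint law invariant; hence for each fixed sign pattern the expectation is unchanged, and averaging over $\sigma$ gives
\begin{equation*}
\E_{z,z'} \sup_{f} \Big| \frac{1}{N}\sum_n \big(f(z'_n) - f(z_n)\big) \Big| = \E_{z,z',\sigma} \sup_{f} \Big| \frac{1}{N}\sum_n \sigma_n \big(f(z'_n) - f(z_n)\big) \Big|.
\end{equation*}
Applying the triangle inequality inside the supremum then splits the right-hand side into two terms, each of which equals $R_N(\mathcal{F})$ by relabeling the dummy sample; this produces the factor of $2$ and finishes the argument.

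The step requiring the most care is the sign-flip symmetry that legitimizes inserting the Rademacher variables, together with the implicit uses of Fubini's theorem when interchanging the $z$-, $z'$-, and $\sigma$-expectations. Each of these is valid only once integrability is confirmed, but in our setting this is automatic: the bounded-difference lemma above already establishes $\sup_{T \in \mathcal{T}_p} \mathcal{L}(y, Tx) \leq (C_y + BC_x)^2$ under the almost-sure bounds $\|x\| \leq C_x$ and $\|y\| \leq C_y$, so every $f \in \mathcal{F}$ is uniformly bounded and all expectations above are finite. With integrability secured, the symmetry step and the Fubini interchanges are routine, and the chain of inequalities delivers $\E \Delta_N(z^N) \leq 2 R_N(\mathcal{F})$.
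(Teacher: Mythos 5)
Your proof is correct, and since the paper itself does not prove this theorem but imports it by citation, your ghost-sample symmetrization argument is precisely the classical proof from the cited references (Vapnik--Chervonenkis and Gin\'e--Zinn): the Jensen/sup interchange, the sign-flip exchangeability step, and the triangle-inequality split yielding the factor $2$ are all handled correctly, including the two-sided (absolute-value) form of $R_N(\mathcal{F})$ used in \Cref{def:rademacher_complexity}. Your integrability check via the uniform bound $\sup_{T \in \mathcal{T}_p} \mathcal{L}(y,Tx) \leq (C_y + BC_x)^2$ adequately justifies the Fubini interchanges, so nothing is missing.
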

\subsection[Proof of Vanishing Rademacher Complexity]{Proof of Theorem~\ref{thm:vanishing_rademacher_complexity}}
A simple analysis gives us an upper bound for $R_N(\mathcal{L} \circ \mathcal{T}_p)$.

\begin{align*}
& R_N(\mathcal{L} \circ  \mathcal{T}_p) \stackrel{\text{(a)}}{=} \E \sup_{T \in \mathcal{T}_p} |\frac{1}{N} \sum_{n=1}^{N}{\sigma_n \|y_n -Tx_n \|^2}| \\
&\stackrel{\text{(b)}} {\leq} \frac{1}{N}\E|\sum_{n=1}^{N}{\sigma_n \| y_n \|^2}| + \frac{1}{N} \E \sup_{T \in \mathcal{T}_p} |\sum_{n=1}^{N}{\sigma_n \| Tx_n \|^2}| \\
&+\frac{2}{N} \E \sup_{T \in \mathcal{T}_p} |\sum_{n=1}^{N}{\sigma_n \text{Re}\langle T x_n, y_n \rangle}| \\
&\stackrel{\text{(c)}}{\leq} \frac{1}{N} \sqrt{\E \sum_{n=1}^{N} \| y_n \|^4 }+ \frac{1}{N} \E \sup_{T \in \mathcal{T}_p}  |\mathrm{Tr}( T^* T\sum_{n=1}^{N}{\sigma_n x_n x_n^*})| \\
&+ \frac{2}{N} \E \sup_{T \in \mathcal{T}_p} \mathrm{Tr}(T \sum_{n=1}^{N}{\sigma_n x_n y_n^*})| \\
&\stackrel{\text{(d)}}{\leq} \frac{1}{\sqrt{N}} \sqrt{\E \| y \|^4 } + \frac{1}{N} \E \sup_{T \in \mathcal{T}_p} \| T^* T\|_{S_p} \| \sum_{n=1}^{N}{\sigma_n x_n x_n^*}\|_{S_q} \\
&+ \frac{2}{N} \E\| T \|_{S_p} \| \sum_{n=1}^{N}{\sigma_n y_n x_n^*} \|_{S_q} \\
&\stackrel{\text{(e)}}{\leq} \frac{1}{\sqrt{N}} \sqrt{\E \| y \|^4 }+ \frac{B^2}{N}\E \| \sum_{n=1}^{N}{\sigma_n x_n x_n^*}\|_{S_q} \\
&+ \frac{2B}{N} \E  \| \sum_{n=1}^{N}{\sigma_n y_n x_n^*} \|_{S_q}
\end{align*}
The equality (a) simply follows from the definition of expected Rademacher complexity of $\mathcal{L} \circ  \mathcal{T}_p$, see \Cref{def:rademacher_complexity}, and (b) from triangle inequality and subadditivity property of supremum. In (c), we used Jensen's inequality along with $\E \sigma_n \sigma_m = \delta (m-n)$ which leads to $\E|\sum_{n=1}^{N}{\sigma_n \| y_n \|^2}| \leq \sqrt{\E \sum_{n=1}^{N} \| y_n \|^4 }$. 

The following results explain the remaining inequalities.

\begin{lemma} \label{lem:adoint_operator}
Let $x \in \mathcal{H}_x$ and $y \in \mathcal{H}_y$ with bounded norms, and $T \in S_{\infty}(\mathcal{H}_x, \mathcal{H}_y)$ be a bounded operator. Then,
\begin{itemize}
\item $\| Tx \|^2 = \mathrm{Tr}(T^*Tx x^*)$,
\item $\langle Tx , y \rangle = \mathrm{Tr} (Txy^*)$.
\end{itemize}
\end{lemma}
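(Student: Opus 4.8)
The plan is to prove both identities by the same elementary device: observe that each operator whose trace is being computed is at most rank one, hence trace class, so its trace is well-defined and may be evaluated in any orthonormal basis via \eqref{eq:trace_norm}. Concretely, I would first isolate the single fact doing all the work, namely that for any bounded operator $A : \mathcal{H}_a \to \mathcal{H}_b$ and any vectors $u \in \mathcal{H}_a$, $v \in \mathcal{H}_b$, the composite $A\, u v^* : \mathcal{H}_b \to \mathcal{H}_b$ satisfies $\mathrm{Tr}(A\, u v^*) = \langle A u, v \rangle$. The two bullets of \Cref{lem:adoint_operator} are then instances of this fact: the second, with $A = T$, $u = x$, $v = y$, is immediate, and the first, with $A = T^*T$, $u = v = x$, gives $\mathrm{Tr}(T^*T\, x x^*) = \langle T^*T x, x\rangle = \langle Tx, Tx\rangle = \|Tx\|^2$, where the middle equality is just the defining property of the adjoint.

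To establish the core fact I would fix an orthonormal basis $\{\phi_k\}$ of $\mathcal{H}_b$ and expand using the definition $u v^* w = \langle v, w\rangle u$ from the paragraph preceding the lemma. Applying $A\, u v^*$ to $\phi_k$ yields $\langle v, \phi_k\rangle A u$, so that $\langle A\, u v^* \phi_k, \phi_k\rangle = \overline{\langle v, \phi_k\rangle}\,\langle A u, \phi_k\rangle = \langle \phi_k, v\rangle\,\langle A u, \phi_k\rangle$. Summing over $k$ and recognizing the generalized Fourier expansion $v = \sum_k \langle \phi_k, v\rangle \phi_k$, the right-hand side collapses to $\langle A u, v\rangle$ by linearity of the inner product in its second slot. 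This is the entire computation.

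The only genuine care needed---what I would flag as the main point rather than a deep obstacle---is twofold. First, well-definedness: because $u v^*$ has rank one and $A$ is bounded, $A\, u v^*$ is finite rank and therefore trace class, so \eqref{eq:trace_norm} returns a value independent of the chosen basis; without this one could not legitimately pass to a convenient basis. Second, the complex-scalar bookkeeping: one must track the conjugate-linearity of $\langle \cdot, \cdot\rangle$ in its first argument (the convention implicit in $u v^* w = \langle v, w\rangle u$), since it is precisely the conjugation $\overline{\langle v, \phi_k\rangle} = \langle \phi_k, v\rangle$ that makes the expansion of $v$ itself (rather than of its conjugate) appear and the sum telescope correctly. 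Everything else is routine.
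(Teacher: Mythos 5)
Your proof is correct and takes essentially the same approach as the paper's: both evaluate the trace of the rank-one compositions $T^*T\,xx^*$ and $T\,xy^*$ term by term in an orthonormal basis, the only cosmetic difference being that you work in an arbitrary basis via the expansion $v = \sum_k \langle \phi_k, v\rangle \phi_k$ and factor out the single identity $\mathrm{Tr}(A\,uv^*) = \langle Au, v\rangle$, whereas the paper chooses a basis adapted to $x$ (resp.\ $y$) so that all but one term of the series vanish. Your explicit observation that $A\,uv^*$ is finite rank, hence trace class, so that \eqref{eq:trace_norm} is well-defined and basis-independent, is a point the paper leaves implicit.
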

\begin{proof}
Define $e_x = \frac{1}{\|x\|}x$ and $e_y = \frac{1}{\|y\|}y$. Let $e_x \cup \{ e_{x,n} \}$, $e_y \cup \{ e_{y,n} \}$ be a set of orthonormal basis for $\mathcal{H}_x$ and $\mathcal{H}_y$. Then,
\begin{align*}
\mathrm{Tr}(T ^*T xx^* ) &= \sum_{n} \langle T^* Tx x^*e_{x,n}, e_{x,n}\rangle + \langle T^* Tx x^*e_x, e_{x}\rangle \\
&= \langle T^*Tx \|x \|, e_{x}\rangle \\
&= \langle T^*Tx , x\rangle\\
&= \|Tx\|^2,	
\end{align*}
and 
\begin{align*}
\mathrm{Tr} (Txy^*) &=  \sum_{n} \langle Tx y^*e_{y,n}, e_{y,n}\rangle + \langle Txy^* e_y, e_{y}\rangle\\
&= \langle Tx\|y\|, e_{y} \rangle \\
&= \langle Tx , y\rangle
\end{align*}
\end{proof}

The second part of $(\text{c})$ is now a direct consequence of \Cref{lem:adoint_operator}. Note that $S_{p}(\mathcal{H}_x, \mathcal{H}_y) \subset S_{\infty}(\mathcal{H}_x, \mathcal{H}_y)$ and that $\mathrm{Tr}(\cdot)$ is a linear operator. The following theorem explains inequality $(\text{d})$.
\begin{theorem}
\cite{reed1975ii} Let $1 \leq p \leq \infty$ and $p^{-1}+ q^{-1} = 1$. If $T_1 \in S_p(\mathcal{H}_z, \mathcal{H}_y)$ and $T_2 \in S_q(\mathcal{H}_x, \mathcal{H}_z)$ then 
\[  
    T_1^*T_2 \in S_{1}(\mathcal{H}_x, \mathcal{H}_y)
\]
and 
\[
    |\mathrm{Tr} ( T_1^* T_2)| \leq \norm{T_1^*T_2}_{S_1} \leq \|T_1\|_{S_p} \|T_2\|_{S_q}.
\]
\end{theorem}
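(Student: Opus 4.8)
The plan is to prove the two stated inequalities separately, both starting from the singular value (Schmidt) decomposition of a compact operator. Consider first the trace bound $|\mathrm{Tr}(T_1^*T_2)| \le \|T_1^*T_2\|_{S_1}$, which is only meaningful when $A := T_1^*T_2$ maps a Hilbert space to itself. Writing the Schmidt decomposition $A\xi = \sum_k s_k(A)\,\langle \psi_k, \xi\rangle\,\phi_k$ with orthonormal systems $\{\psi_k\}$ and $\{\phi_k\}$, I would complete $\{\psi_k\}$ to an orthonormal basis and evaluate the trace series \eqref{eq:trace_norm} term by term, obtaining $\mathrm{Tr}(A) = \sum_k s_k(A)\,\langle \phi_k, \psi_k\rangle$. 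Since $|\langle \phi_k,\psi_k\rangle| \le \|\phi_k\|\,\|\psi_k\| = 1$ by Cauchy--Schwarz, the triangle inequality gives $|\mathrm{Tr}(A)| \le \sum_k s_k(A) = \|A\|_{S_1}$. This step presupposes $A \in S_1$, which is exactly what the second inequality supplies.

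For the main bound $\|T_1^*T_2\|_{S_1} \le \|T_1\|_{S_p}\|T_2\|_{S_q}$, I would set $A := T_1^*$ and $B := T_2$ and first record $s_k(T_1^*) = s_k(T_1)$, so that $\|A\|_{S_p} = \|T_1\|_{S_p}$. The heart of the argument is the singular-value inequality for products: for every $n$,
\[
\sum_{k=1}^n s_k(AB) \;\le\; \sum_{k=1}^n s_k(A)\,s_k(B).
\]
I would derive this from Horn's log-majorization $\prod_{k=1}^n s_k(AB) \le \prod_{k=1}^n s_k(A)\,s_k(B)$ together with the standard fact that weak log-majorization implies weak majorization, because $t \mapsto e^t$ is increasing and convex. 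Letting $n \to \infty$ and then invoking the classical Hölder inequality for the scalar sequences $(s_k(A))$ and $(s_k(B))$ with exponents $p,q$ yields
\[
\|AB\|_{S_1} = \sum_k s_k(AB) \le \sum_k s_k(A)\,s_k(B) \le \Big(\sum_k s_k(A)^p\Big)^{1/p}\Big(\sum_k s_k(B)^q\Big)^{1/q} = \|T_1\|_{S_p}\|T_2\|_{S_q}.
\]
Finiteness of the right-hand side certifies that $\sum_k s_k(T_1^*T_2) < \infty$, i.e. $T_1^*T_2 \in S_1$, which closes the gap left open in the first part.

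The main obstacle is the product singular-value inequality (Horn/Ky Fan) in the infinite-dimensional compact setting: it is classical in finite dimensions, and I would lift it to compact operators by compressing to the top-$n$ singular subspaces (finite-rank truncations) and passing to the limit, relying on the variational formula $\sum_{k=1}^n s_k(A) = \sup\{\sum_{j=1}^n |\langle A u_j, v_j\rangle| : \{u_j\},\{v_j\}\text{ orthonormal}\}$ and continuity of singular values under finite-rank approximation. I would also treat the endpoint cases $p=1,\,q=\infty$ (and symmetrically) on their own, where Hölder degenerates and the bound follows directly from $s_k(AB) \le \|B\|_{\mathrm{op}}\,s_k(A)$, giving $\|AB\|_{S_1} \le \|B\|_{\mathrm{op}}\sum_k s_k(A)$. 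Everything else in the argument, namely the scalar Hölder and Cauchy--Schwarz estimates, is routine.
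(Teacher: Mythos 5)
The paper does not actually prove this statement: it is imported verbatim from Reed and Simon as a cited background fact, so there is no internal proof to compare against, and your task was in effect to reconstruct the textbook argument --- which you have done correctly. Your route (Schmidt decomposition plus Cauchy--Schwarz for $|\mathrm{Tr}(A)| \le \|A\|_{S_1}$; Horn's log-majorization upgraded to weak majorization via the increasing convex function $\exp$, followed by scalar H\"older with exponents $p,q$, for $\|T_1^*T_2\|_{S_1} \le \|T_1\|_{S_p}\|T_2\|_{S_q}$) is exactly the classical proof in the cited literature (e.g.\ Simon's \emph{Trace Ideals}, where H\"older is deduced from the Horn/Ky Fan inequality), and you correctly isolate the endpoint $q=\infty$, where $S_\infty$ consists of all bounded operators so the singular-value H\"older argument does not apply and one instead uses $s_k(AB) \le \|B\|_{\mathrm{op}}\, s_k(A)$ directly. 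Two fine points to nail down in a full write-up: (i) in the majorization step both sequences must be nonincreasing (the product $s_k(A)s_k(B)$ of nonincreasing sequences is nonincreasing, so this is fine) and vanishing singular values need handling before taking logarithms, say by restricting to the nonzero part or a perturbation argument; (ii) the trace inequality requires $A = T_1^*T_2$ to act on a single space, which you rightly flag --- note in passing that the paper's statement as printed has a domain/codomain mismatch ($T_2$ maps into $\mathcal{H}_z$ while $T_1^*$ is defined on $\mathcal{H}_y$ under the paper's convention that $S_p(\mathcal{H}_a,\mathcal{H}_b)$ means operators $\mathcal{H}_a \to \mathcal{H}_b$), and in the paper's actual application the trace is only ever taken in the square case; finally, your term-by-term evaluation of $\mathrm{Tr}(A)$ in the basis completing $\{\psi_k\}$ is licensed by the basis-independence of the trace on $S_1$, which is worth stating explicitly since the finiteness of $\|A\|_{S_1}$ (supplied by your second inequality) is what guarantees it.
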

\noindent Finally, $\| T^* T \|_{S_p} = \| T \|^2_{S_{2p}} \leq\| T \|^2_{S_p}$ proves the inequality $(\text{e})$.

We want to bound the expected Schatten $q$-norms of random operators $T_{xx}  := \sum_{n=1}^{N}{\sigma_n x_n x_n^*}$ and $T_{yx}  := \sum_{n=1}^{N}{\sigma_n y_n x_n^*}$. From \Cref{lem:random-op},
\begin{equation*}
\E \| T_{xx}\|_{S_q} \leq  N^{\max\{ \frac{1}{2}, \frac{1}{q}\}}\sqrt{ \E  \| x\|^4  },
\end{equation*}
\begin{equation*}
\E \| T_{yx} \|_q \leq  N^{\max\{ \frac{1}{2}, \frac{1}{q}\}}\sqrt{ \E  \| x\|^2 \| y\|^2  }.
\end{equation*}
Since $\| x_n \|$ and $\| y_n \|$ are bounded random variables, we conclude
\begin{align*}
R_N(\mathcal{L} \circ  \mathcal{T}_p) \leq \frac{C_y}{\sqrt{N}}+ N^{-\min\{ \frac{1}{2}, \frac{1}{p}\}}(B^2 C_x  + 2BC_x C_y).
\end{align*}

\subsection{Proof of Lemma \ref{lem:random-op}}

To prove Lemma \ref{lem:random-op}, we start by stating two results about compact operators. Let $\text{Spec}(A)$ denote the spectrum of an operator $A$. The we have:

\begin{lemma} \label{lem:convexity} \cite{petz1994survey}
Let $f : [\alpha, \beta] \rightarrow \mathbb{R}$ be convex. Then the functional
$F(A) = \mathrm{Tr} ( f(A) )$ is convex on the set $\{A \in \mathcal{T} : \text{Spec}(A) \in [\alpha, \beta]\}$ and $\mathcal{T}$ is the set of finite rank, self-adjoint operators.
\end{lemma}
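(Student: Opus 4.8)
The plan is to reduce the claim to the classical finite-dimensional convexity of $A \mapsto \mathrm{Tr}(f(A))$ and then prove that from scratch using nothing more than scalar convexity and Jensen's inequality. First I would note that since every admissible $A$ is finite rank and self-adjoint, and we only ever compare two points $A$, $B$ and their convex combination $C = \lambda A + (1-\lambda)B$ with $\lambda \in [0,1]$, all three operators act nontrivially only on the finite-dimensional subspace $V = \mathrm{ran}(A) + \mathrm{ran}(B)$. Because a nonzero finite-rank operator on an infinite-dimensional space always has $0$ in its spectrum, the constraint $\mathrm{Spec}(A) \subset [\alpha,\beta]$ forces $0 \in [\alpha,\beta]$, and finiteness of $\mathrm{Tr}(f(A))$ forces the normalization $f(0)=0$ so that the infinite kernel contributes nothing. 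After these observations the problem is exactly convexity of $F$ for self-adjoint operators on $V$ with spectrum in $[\alpha,\beta]$, and I would also record that $\mathrm{Spec}(A)\subset[\alpha,\beta]$ is equivalent to $\alpha I \preceq A \preceq \beta I$ on $V$, a condition preserved under convex combinations, so that $C$ is again admissible.

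The key technical ingredient is Peierls' inequality: for any self-adjoint $A$ on $V$ with spectrum in $[\alpha,\beta]$ and any orthonormal basis $\{e_i\}$ of $V$,
\[
    \sum_i f(\langle A e_i, e_i\rangle) \;\leq\; \sum_i \langle f(A) e_i, e_i\rangle \;=\; \mathrm{Tr}(f(A)).
\]
I would prove this by diagonalizing $A = \sum_j a_j\, \psi_j \psi_j^*$ and writing $\langle A e_i, e_i\rangle = \sum_j a_j \, |\langle \psi_j, e_i\rangle|^2$. The weights $p_{ij} := |\langle \psi_j, e_i\rangle|^2$ are nonnegative and satisfy $\sum_j p_{ij} = \|e_i\|^2 = 1$, so each diagonal entry is a genuine convex combination of eigenvalues lying in $[\alpha,\beta]$. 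Scalar Jensen then gives $f(\langle A e_i, e_i\rangle) \leq \sum_j p_{ij} f(a_j) = \langle f(A) e_i, e_i\rangle$, and summing over $i$ yields the inequality.

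With Peierls in hand the convexity is immediate. I would diagonalize $C = \lambda A + (1-\lambda) B$ in an orthonormal eigenbasis $\{e_i\}$ of $V$ with $C e_i = c_i e_i$, so that
\[
    \mathrm{Tr}(f(C)) = \sum_i f(c_i) = \sum_i f\big(\lambda \langle A e_i, e_i\rangle + (1-\lambda)\langle B e_i, e_i\rangle\big).
\]
Applying scalar convexity of $f$ termwise, and then Peierls' inequality to both $A$ and $B$ against the \emph{same} basis $\{e_i\}$, gives
\[
    \mathrm{Tr}(f(C)) \leq \lambda \sum_i f(\langle A e_i, e_i\rangle) + (1-\lambda)\sum_i f(\langle B e_i, e_i\rangle) \leq \lambda\, \mathrm{Tr}(f(A)) + (1-\lambda)\, \mathrm{Tr}(f(B)),
\]
which is exactly convexity of $F$.

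I expect the only genuine obstacle to be functional-analytic bookkeeping rather than the inequality chain itself: one must justify that $\mathrm{Tr}(f(\cdot))$ is well defined on the infinite-dimensional space (the $f(0)=0$ normalization and $0\in[\alpha,\beta]$ points above) and that compression to the finite-dimensional joint range loses nothing. An alternative, more slick route would be to invoke Klein's inequality $\mathrm{Tr}\big(f(X) - f(Y) - (X-Y)f'(Y)\big) \geq 0$: applying it twice with $Y = C$ and $X \in \{A, B\}$ and then combining the two inequalities with weights $\lambda$ and $1-\lambda$ makes the gradient terms cancel, since $\lambda A + (1-\lambda)B - C = 0$, leaving $\lambda\,\mathrm{Tr}(f(A)) + (1-\lambda)\,\mathrm{Tr}(f(B)) \geq \mathrm{Tr}(f(C))$. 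This trades the elementary Jensen argument for a less elementary operator inequality (and requires care when $f$ is only subdifferentiable), so I would prefer the Peierls route.
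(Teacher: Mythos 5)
Your proof is correct, but there is nothing in the paper to compare it against line by line: the paper states this lemma with a citation to Petz's survey \cite{petz1994survey} and gives no proof of its own. Your Peierls-inequality route is in fact the classical argument found in that reference --- diagonalize the convex combination $C=\lambda A+(1-\lambda)B$, apply scalar convexity to the diagonal entries $\langle Ce_i,e_i\rangle=\lambda\langle Ae_i,e_i\rangle+(1-\lambda)\langle Be_i,e_i\rangle$, and then apply Peierls' inequality $\sum_i f(\langle Ae_i,e_i\rangle)\leq \mathrm{Tr}(f(A))$ to $A$ and $B$ against the common eigenbasis of $C$; equivalently, $\mathrm{Tr}(f(A))=\sup_{\{e_i\}}\sum_i f(\langle Ae_i,e_i\rangle)$ exhibits $F$ as a supremum of convex functions. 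Your Jensen argument for Peierls via the doubly stochastic weights $p_{ij}=|\langle\psi_j,e_i\rangle|^2$ is sound, as is the check that admissibility ($\alpha I\preceq A\preceq\beta I$ on $V$, hence numerical range in $[\alpha,\beta]$) is preserved under convex combination. What your write-up adds beyond the citation is the functional-analytic bookkeeping the paper leaves implicit: on an infinite-dimensional $\mathcal{H}$ a finite-rank self-adjoint operator has an infinite-dimensional kernel, so the lemma as stated only makes sense when $0\in[\alpha,\beta]$ and $f(0)=0$ (otherwise $\mathrm{Tr}(f(A))$ is infinite), and the compression to $V=\mathrm{ran}(A)+\mathrm{ran}(B)$ then loses nothing because a self-adjoint finite-rank operator vanishes on $(\mathrm{ran}\,A)^\perp$. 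This normalization is harmless where the paper invokes the lemma, since there $f(t)=t^{q/2}$ (concave; apply the lemma to $-f$) vanishes at the origin. Your alternative Klein's-inequality route is also standard and valid, with the caveat you already flag about mere subdifferentiability of $f$.
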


The following lemma is a standard application of H\"older's inequality.

\begin{lemma} \label{lem:lp_to_lq}
Let $T$ be a non-negative, compact linear operator of rank at most $N$. Then, 
\begin{equation*}
\| T \|_{S_p} \leq N^{p^{-1}-q^{-1}}\| T \|_{S_q}
\end{equation*}
where $q \geq p \geq 1$.
\begin{proof}
Let $s_1(T) \geq \cdots \geq s_{N}(T) \geq 0$ be the sequence of singular values of $T$ (with multiplicities), and $q > p \geq  1$. Then,
\begin{align*}
\| T \|_{S_p} &= \big( \sum_{n=1}^{N} s_n(T)^{p} \big)^{\frac{1}{p}} \\
&\stackrel{\text{H\"older ineq.}}{\leq}\bigg( \big(\sum_{n=1}^{N} s_n(T)^{q}\big)^{\frac{p}{q}} \big(\sum_{n=1}^{N} 1^{\frac{q}{q-p}}\big)^{1- \frac{p}{q}} \bigg)^{\frac{1}{p}}\\
&= N^{p^{-1}-q^{-1}} \| T \|_{S_q} .
\end{align*}
\end{proof}
\end{lemma}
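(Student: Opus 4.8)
The plan is to reduce the operator-norm inequality to an elementary comparison of $\ell^p$ and $\ell^q$ norms of the finite singular-value sequence, and then invoke H\"older's inequality. Since $T$ is non-negative, compact, and of rank at most $N$, its singular values form a finite sequence $s_1(T) \geq \cdots \geq s_N(T) \geq 0$ (padding with zeros if the rank is strictly less than $N$). By the definition of the Schatten norm \eqref{eq:Sp_norm}, both $\| T \|_{S_p}$ and $\| T \|_{S_q}$ are functions of this sequence alone, so the claim is equivalent to the finite-dimensional bound $\| v \|_{\ell^p} \leq N^{1/p - 1/q}\| v \|_{\ell^q}$ applied to the vector $v = (s_1(T), \ldots, s_N(T))$.

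First I would dispose of the trivial boundary case $q = p$, where $N^{1/p - 1/q} = 1$ and the inequality holds with equality. For $q > p$, I would apply H\"older's inequality to the product $\sum_{n=1}^N s_n(T)^p \cdot 1$ with the conjugate exponents $r = q/p$ and $r' = q/(q-p)$, which satisfy $r^{-1} + r'^{-1} = p/q + (q-p)/q = 1$. This yields
\[
\sum_{n=1}^N s_n(T)^p \leq \Bigl(\sum_{n=1}^N s_n(T)^q\Bigr)^{p/q} \Bigl(\sum_{n=1}^N 1\Bigr)^{(q-p)/q} = \Bigl(\sum_{n=1}^N s_n(T)^q\Bigr)^{p/q} N^{(q-p)/q}.
\]
Raising both sides to the power $1/p$ and using $(q-p)/(pq) = p^{-1} - q^{-1}$ gives exactly $\| T \|_{S_p} \leq N^{p^{-1}-q^{-1}}\| T \|_{S_q}$.

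The calculation is entirely routine, so there is no genuine obstacle; the only points demanding care are bookkeeping ones. I would double-check that $q/(q-p)$ is the correct H\"older conjugate of $q/p$ (which ceases to be defined precisely at $q = p$, hence the separate boundary case) and that finiteness of the rank is exactly what keeps the constant $N^{(q-p)/q}$ finite, since the second H\"older factor counts $N$ unit terms. Because $T$ is assumed non-negative its singular values coincide with its eigenvalues, but the estimate itself uses only that the singular-value sequence is non-negative and has at most $N$ terms, so the non-negativity hypothesis is convenient for the surrounding development rather than essential to this bound.
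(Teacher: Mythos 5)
Your proof is correct and follows essentially the same route as the paper's: H\"older's inequality applied to $\sum_{n=1}^N s_n(T)^p \cdot 1$ with conjugate exponents $q/p$ and $q/(q-p)$, then raising to the power $1/p$. Your explicit handling of the boundary case $q=p$ and the zero-padding of the singular-value sequence are minor tidiness improvements over the paper's version, which tacitly assumes $q>p$.
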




We now proceed to prove Lemma \ref{lem:random-op}. If $1 \leq q < 2$, we have:
\begin{align*}
\E \| T_{xx} \|_q &\stackrel{\text{(a)}}{=} \E_{x} \E_{\sigma} (\mathrm{Tr}\big( (T_{xx}T_{xx}^{*})^{q/2} \big) )^{1/q} \\
&\stackrel{\text{(b)}}{\leq} \E_{x} (\mathrm{Tr}\big( ( \E_{\sigma}  T_{xx}T_{xx}^{*})^{q/2} \big) )^{1/q}\\
&\stackrel{\text{(c)}}{=} \E_{x} (\mathrm{Tr}\big( (\sum_{n=1}^{N}{\| x_n\|^2 x_n x_n^* } )^{q/2} \big) )^{1/q}\\
&\stackrel{\text{(d)}}{=} \E_{x} \sqrt{ \| \sum_{n=1}^{N} \| x_n\|^2 x_n x_n^*\|_{S_{q/2}} } \\
&\stackrel{\text{(e)}}{\leq} \E_{x} \sqrt{ N^{\frac{2}{q}-1} \mathrm{Tr}\big( \sum_{n=1}^{N} \| x_n\|^2 x_n x_n^*\big) } \\
&\stackrel{\text{(f)}}{\leq}  N^{1/q}\sqrt{ \E  \| x\|^4  }.
\end{align*}
We start from $(a)$, the definition of Schatten norm of operators,  \cref{eq:Sp_norm}; (b) follows from \Cref{lem:convexity} as $f(A) = \mathrm{Tr} ( A^{\frac{q}{2}} )$ is a concave functional for $1 \leq  q < 2$. The equality (c) follows from $\E_{\sigma} \sum_{n,m =1}^{N} \sigma_n \sigma_m x_n x_n^* x_m x_m^*= \sum_{n=1}^{N} \|x_n\|^2 x_n x_n^*$. The equality $(d)$ combines the definition of $\norm{\cdot}_{S_p}$ norm and the fact that $\sum_{n=1}^{N} \| x_n\|^2 x_n x_n^*$ is a non-negative, self-adjoint operator. The inequality (e) follows from the bound provided in \Cref{lem:lp_to_lq}, where we upper bound the operator $S_{q/2}$ norm by that of $S_1$, and finally (f) is due to the concavity of the squre root and the fact that $\{x_n\}_{n=1}^{N}$ are identically distributed and $\mathrm{Tr} (\|x\|^2 x x^*) = \| x\|^4$.

If $q \geq 2$, we have:
\begin{align*}
\E \| T_{xx} \|_q &\stackrel{\text{(a)}}{\leq} \E_{x}\E_{\sigma} (\mathrm{Tr}\big( T_{xx}T_{xx}^{*} \big) )^{\frac{1}{2}} \\
&\stackrel{\text{(b)}}{\leq} \E_{x} (\mathrm{Tr}\big( \E_{\sigma} T_{xx}T_{xx}^{*} \big) )^{\frac{1}{2}}\\
&= \E_{x} (\mathrm{Tr}\big( \sum_{n=1}^{N}{\| x_n\|^2 x_n x_n^* }  \big) )^{\frac{1}{2}}\\
&\stackrel{\text{(c)}}{\leq}  (\E_{x}\sum_{n=1}^{N}{\| x_n\|^4 }  )^{\frac{1}{2}}\\
&\stackrel{\text{(d)}}{\leq}  N^{1/2}\sqrt{ \E  \| x\|^4  }
\end{align*}
The inequality $(\text{a})$ is due to the fact that for any operator $T \in S_{p}(\mathcal{H}_1, \mathcal{H}_2) \cap S_{q}(\mathcal{H}_1, \mathcal{H}_2)$, we have $\| T \|_{p} \leq \| T \|_{q}$ if $p \geq q$;  (b) follows from Jensen's inequality and concavity of $x \mapsto \sqrt{x}$; (c) from the linearity of $\mathrm{Tr}(\cdot)$ and $\E[\cdot]$ and the fact that $\mathrm{Tr} (\|x\|^2 x x^*)= \| x\|^4$; and (d) 
from that fact that the $x_i$ are identically distributed.

\section{Conclusion}

We studied the complexity of learning Schatten--von Neumann operators. Our results are the first we know of that give guarantees for learning infinite-dimensional non-compact classes of operators. We show that for $p < \infty$ these operators are indeed learnable. Our motivation comes primarily from applications of machine learning in data-driven approaches to inverse problems, which are fundamentally problems of regressing infinite dimensional signals such as images, from infinite dimensional data such as sinograms (in computed tomography). In that context, our results imply scaling laws between the number of training samples and the  approximation error guaranteed for the reconstructed images. We see this contribution as a first in a series that establishes guarantees for applications of machine learning to inverse problems.

\bibliographystyle{IEEEbib}
\bibliography{opsamp}

\end{document}